\newtheorem{theorem}{Theorem}
\newtheorem{lemma}[theorem]{Lemma}
\newtheorem{definition}[theorem]{Definition}
\newtheorem{remark}{Remark}[theorem]
\newtheorem{property}[theorem]{Property}
\DeclareMathOperator*{\argmin}{arg\,min}
\def\calX{\mathcal{X}}
\def\KL{\mathrm{KL}}
\def\dmu{\mathrm{d}\mu}
\def\bbR{\mathbb{R}}
\def\JS{\mathrm{JS}}
\def\dx{\mathrm{d}x}
\def\dy{\mathrm{d}y}
\def\dz{\mathrm{d}z}
\def\bbR{\mathbb{R}}
\def\KL{\mathrm{KL}}
\def\TV{\mathrm{TV}}
\def\kl{\mathrm{kl}}
\def\diag{\mathrm{diag}}
\def\calX{\mathcal{X}}
\def\calY{\mathcal{Y}}
\def\calC{\mathcal{C}}
\def\eqdef{{:=}}
\def\tr{{\mathrm{tr}}}
\def\st{{\ :\ }}
\def\inner#1#2{{\langle #1,#2\rangle}}
\DeclareRobustCommand\onedot{\futurelet\@let@token\bmv@onedotaux}
\def\bmv@onedotaux{\ifx\@let@token.\else.\null\fi\xspace}
\def\wrt{w.r.t\onedot}
\begin{document}

\title{Chain Rule Optimal Transport}
\author{Frank Nielsen\\
Sony Computer Science Laboratories Inc\\
Tokyo, Japan\\
\ \\
{\small ORCID: 0000-0001-5728-0726}\\
E-mail: {\tt Frank.Nielsen@acm.org}\\
\and
Ke Sun\\
CSIRO's Data61\\
Sydney, Australia\\
\ \\
{\small ORCID: 0000-0001-6263-7355}\\
E-mail: {\tt sunk@ieee.org}
}

\date{}
\maketitle

\begin{abstract}
We define a novel class of distances between statistical multivariate distributions by modeling an optimal transport problem on their marginals with respect to a ground distance defined on their conditionals.
These new distances are metrics whenever the ground distance between the marginals is a metric, generalize both the Wasserstein distances between discrete measures and a recently introduced metric distance between statistical mixtures, and provide an upper bound for jointly convex distances between statistical mixtures.
By entropic regularization of the optimal transport, we obtain a fast differentiable Sinkhorn-type distance.
We experimentally evaluate our new family of distances by quantifying the upper bounds of several jointly convex distances between statistical mixtures, and by proposing a novel efficient method to learn  Gaussian mixture models (GMMs) by simplifying  kernel density estimators with respect to our distance.
Our GMM learning technique experimentally improves significantly over the EM implementation of {\tt sklearn} on the {\tt MNIST} and {\tt Fashion MNIST} datasets.
\end{abstract}

\noindent {\bf Keywords}: Optimal transport, Wasserstein distances, Information geometry,
$f$-divergences, Total Variation, Jensen-Shannon divergence, Bregman divergence,
R\'enyi divergence, Statistical mixtures, Joint convexity.

\section{Introduction and motivation}

Calculating dissimilarities between statistical mixtures is a fundamental operation  met in statistics, machine learning,
signal processing, and information fusion~\citep{MixtureFusion-2010} among others.
Minimizing the information-theoretic Kullback-Leibler divergence (KLD also called relative entropy)
between parametric models yields practical learning machines.
However, the KLD or in general the Csisz\'ar's $f$-divergences between statistical mixtures~\citep{arxivlse-2016,distlse-2016} do not admit closed-form formula, and needs in practice to be approximated by costly Monte Carlo stochastic integration.
To tackle this computational tractability problem, two research directions have been considered in the literature:
\ding{192} propose some new distances between mixtures that yield closed-form formula~\citep{mixdist-2012,StatMinkowski-2019}
(e.g., the Cauchy-Schwarz divergence, the Jensen quadratic R\'enyi divergence, the statistical Minkowski distances).
\ding{193} lower and/or upper bound the $f$-divergences between mixtures~\citep{KLGMM-2012,distlse-2016}.
However, this direction is tricky when considering bounded divergences like the Total Variation (TV) distance or the Jensen-Shannon (JS) divergence that are upper bounded by $1$ and $\log 2$, respectively, or when considering high-dimensional mixtures.

When dealing with probability densities, two main classes of statistical distances have been widely studied in the literature:
\ding{192}
The invariant $f$-divergences of Information Geometry~(\citealt{IG-2016}; IG) characterized as the class of separable distances which are information monotone (i.e., satisfies the partition inequality~\cite{GenTsallisDiv-2019}),
and
\ding{193} The Optimal Transport (OT)/Wasserstein/EMD distance~\citep{Monge-1781,bookOT-2015} which can be computationally accelerated using entropy regularization~\citep{Sinkhorn-2013,OTMMD-2018} (i.e., the Sinkhorn divergence).

In general, computing closed-form formula for the OT between parametric distributions is difficult except in 1D~\citep{COT-2019}.
A closed-form formula is known for elliptical distributions~\citep{OTElliptical-1982} for the $2$-Wasserstein metric (including the multivariate Gaussian distributions), and the OT of multivariate continuous distributions can be calculated from the OT of their copulas~\citep{mvnOT-2018}.

The geometry related to these OT/IG distances are different.
For example, consider univariate location-scale families (or multivariate elliptical distributions):
For OT, the $2$-Wasserstein distance between any two members admit the {\em same} closed-form formula~\citep{OTElliptical-1982,Gelbrich-1990} (depending only on the mean and variance parameters, and not on the type of location-scale family).
The OT geometry of Gaussian distributions has {\em positive curvature}~\citep{OTgeometry-1996,WassersteinGaussian-2011}.
For any smooth $f$-divergence, the information-geometric manifold has {\em negative curvature}~(\citealt{komaki-2007}; hyperbolic geometry).

In this chapter, we first generalize the work of~\cite{OTWeight-2000} that proposed a novel family of statistical distances  between statistical mixtures  (that we term MCOTs, standing for {\em Mixture Component Optimal Transports})
by solving linear programs between mixture component weights  where the elementary distance between any two mixture components is prescribed.
Then we propose to learn  Gaussian mixture models (GMMs) by simplifying kernel density estimators (KDEs) using our distance.

We describe our main contributions as follows:
\begin{itemize}

\item We define the generic {\em Chain Rule Optimal Transport} (CROT) distance in Definition~\ref{def:CROT}, and prove that the CROT distance is a metric whenever the distance between conditional distributions is a metric in Theorem~\ref{thm:CROTmetric}. The CROT distance unifies and extends the Wasserstein distances and the MCOT distance~\cite{OTWeight-2000} between statistical mixtures.

\item We report a novel generic upper bound for statistical distances between marginal distributions~\citep{TVMM-2018} in \S\ref{sec:CROTmix} (Theorem~\ref{thm:ubjcd}) whenever the ground distance is jointly convex, and introduce its relaxed Sinkhorn distance (SCROT) for fast estimation.  Numerical experiments in \S\ref{sec:exp} highlight quantitatively the upper bound performance of the (S)CROT distances for bounding the total variation distance, the Wasserstein $W_p$ metric, and the R\'enyi $\alpha$-divergences.

\item We design a novel learning algorithm for GMMs by simplifying KDEs with respect to SCROT that yields in that case a closed-form formula (Eq.~\ref{eq:normalize}) in \S\ref{sec:GMM}, and demonstrate experimentally better results than the Expectation-Maximization (EM) implementation~\citep{EM-1977} in {\tt sklearn}~\cite{scikit} on MNIST~\citep{MNIST-1998} and Fashion MNIST~\citep{xiao2017} datasets.

\end{itemize}

\section{Chain Rule Optimal Transport}

Recall the basic {\em chain rule} factorization of a joint probability distribution:
\begin{equation*}
p(x,y) = p(y)\,p(x|y),
\end{equation*}
where probability $p(y)$ is   the {\em marginal probability},
and probability $p(x|y)$ is   the {\em conditional probability}.
Given $p(y)$ and $p(x|y)$ in certain families of simple probability distributions,
one can get a density model through marginalization:
\begin{equation*}
p(x) = \int p(x,y) \dy.
\end{equation*}
For example, for latent models like statistical mixtures or hidden Markov models~\citep{HMMDist-2005,UB-KL-HMM-2006},
$x$ plays the role of the {\em observed variable} while $y$ denotes the {\em hidden variable}~\citep{LatentModel-2013}
(unobserved so that inference has to tackle incomplete data, say, using the EM algorithm~\citep{EM-1977}.
Let $\calX=\{p(x)\}$ and $\calY=\{p(y)\}$ denote the manifolds of marginal probability densities;
let $\calC=\{p(x|y)\}$ denote the manifold of conditional probability density.
We state the generic definition of the {\em Chain Rule Optimal Transport} (CROT) distance between the distributions $p(x)$ and $q(x)$
(with $q(x) = \int q(y) q(x|y) \dy$) as follows:

\begin{definition}[CROT distance]\label{def:CROT}
Given two multivariate distributions $p(x,y)$ and $q(x,y)$,
we define the {\em Chain Rule Optimal Transport} as follows:
\begin{equation}
H_D(p, q)
\eqdef
\inf_{r\in\Gamma(p(y),q(z))}
E_{r(y,z)} \left[ D\bigg(p(x|y), q(x|z)\bigg) \right],
\end{equation}
where
$D(\cdot,\cdot)$ is a ground distance defined on conditional density manifold $\calC=\{p(x|y)\}$  (e.g., the Total Variation),
$\Gamma(p(y),q(z))$ is the set of all probability measures on $\calY^2$
satisfying the constraints $\int r(y,z) \dz  = p(y)$ and $\int r(y,z) \dy  = q(z)$,
and $E_{r(y,z)}$ denotes the expectation with respect to~$r(y,z)$.
\end{definition}

When the ground distance $D$ is clear from the context, we write $H(p, q)$ for a shortcut of $H_{D}(p, q)$.
A similar definition was introduced by~\cite{ruschendorf} termed ``Markov construction.''
In our work, the CROT
is defined with respect to a distance metric on the manifold $\calC$ of conditional densities (information-geometric distance)
rather than a section of the distance metric on the space of $(x,y)$.

A key property of CROT is stated as follows:

\begin{property}[Metric properties]\label{thm:CROTmetric}
If $D(\cdot,\cdot)$ is a metric on $\calC$,
then $H_D(p,q)$ is a metric on $\calX$ and a pseudometric on $\calY\times\calC$.
\end{property}
\noindent
The proof is given in \cref{thm:CROTmetric}.
Notice that $H_D$ is a metric on $\calX$ but only a pseudometric (satisfying 
non-negativity, symmetry, triangle inequality,
and $H_D(p,p)=0$, $\forall{p}\in\calY\times\calC$ instead of the law of indiscernibles of metrics)
on the product manifold $\calY\times\calC$. 

Since $\int r(y,z) \dy\dz=1$ and since $r(y,z)=p(y)q(z)$ is a feasible transport solution,
we get the following upper bounds:

\begin{property}[Upper bounds]\label{prop:maxUB}
\begin{align}
H_D(p, q) & \leq \int_y\int_z p(y)q(z)  D\bigg(p(x|y), q(x|z)\bigg) \mathrm{d}y\mathrm{d}z\nonumber\\
& \leq \max_{y,z} D\bigg(p(x|y), q(x|z)\bigg).
\end{align}
\end{property}

The CROT distances unify and generalize two distances met in the literature:

\begin{remark}[CROT generalizes Wasserstein/EMD]
In the case that $p(x|y)=\delta(x-y)$ (Dirac distributions), we recover the
Wasserstein distance~\citep{WassersteinGaussian-2011} between point sets
(or Earth Mover Distance, EMD;~\citealt{EMD-2000}), where $D(\cdot,\cdot)$ is the ground metric distance.
Note that point sets can be interpreted as discrete probability measures.
\end{remark}

The Wasserstein distance $W_p$ (for $p\geq 1$, with $W_1$ introduced by~\citealt{Wasserstein-1969})
follows from the Kantorovich's~\citeyearpar{Kantorovich-1942,Kantorovitch-1958} relaxation framework
of Monge's~\citeyearpar{Monge-1781} original optimal mass transport formulation.

\begin{remark}[CROT generalizes MCOT]
When both $p(y)$ and $q(z)$ are both (finite) categorical distributions,
we recover the distance formerly defined by~\cite{OTWeight-2000} that
we termed the MCOT distance.
\end{remark}

CROT is a nontrivial generalization of both the Wasserstein distance and the MCOT, because CROT
gives a {\em flexible} definition on the OT.
Given a joint distribution $p(x_1,\cdots,x_n)$, one can consider
a {\em family} of distances, depending on
how the random variables $x_1,\cdots,x_n$ split, and how the ground distances $D$ are selected.
For example, one can define $D$ to be CROT and we have a nested CROT distance.
In the simplest case, let
\begin{align}\label{eq:dcrot}
&D\left(p(x\vert{}y), q(x\vert{}z)\right)\nonumber\\
&= \inf_{r'\in\Gamma(p(x|y),q(x'|z))} E_{r'(x,x')} \Vert(x,y)-(x',z)\Vert_p,
\end{align}
then $H_D(p,q)$ becomes a ``two-stage optimal transport''
\begin{align}
H_D(p,q) = &\inf_{r\in\Gamma(p(y),q(z))} E_{r(y,z)}\inf_{r'\in\Gamma(p(x|y),q(x'|z))}\nonumber\\
&E_{r'(x,x')} \Vert(x,y)-(x',z)\Vert_p,
\end{align}
We have the following fundamental monotonicity:
\begin{theorem}
If $D(\cdot,\cdot)$ is given by \cref{eq:dcrot}, then we have:
\begin{equation}
H_D(p,q) \ge \inf_{r\in\Gamma(p(x,y),q(x',z))} E_{r}
\Vert(x,y)-(x',z)\Vert_p.\nonumber
\end{equation}
\end{theorem}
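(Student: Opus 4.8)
The plan is to invoke the \emph{gluing} (composition) lemma of optimal transport: from a coupling of the marginals $p(y),q(z)$ together with couplings of the conditionals, build one joint coupling on $(x,y,x',z)$ that is simultaneously feasible for the ``big'' transport problem on the right-hand side. First I would fix any $r\in\Gamma(p(y),q(z))$ and, for ($r$-almost) every pair $(y,z)$, a coupling $r'_{y,z}\in\Gamma(p(x|y),q(x'|z))$ depending measurably on $(y,z)$, and set
\[
\pi(x,y,x',z)\eqdef r(y,z)\,r'_{y,z}(x,x').
\]
Marginalising gives $\int\pi\,\mathrm{d}x'\,\mathrm{d}z=\int r(y,z)\,p(x|y)\,\mathrm{d}z=p(y)\,p(x|y)=p(x,y)$, and symmetrically $\int\pi\,\mathrm{d}x\,\mathrm{d}y=q(x',z)$. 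Hence pushing $\pi$ forward under $(x,y,x',z)\mapsto\big((x,y),(x',z)\big)$ yields an element of $\Gamma(p(x,y),q(x',z))$, so
\[
\inf_{r''\in\Gamma(p(x,y),q(x',z))}E_{r''}\Vert(x,y)-(x',z)\Vert_p\ \le\ E_{\pi}\Vert(x,y)-(x',z)\Vert_p .
\]

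The right-hand side disintegrates exactly along the chain-rule factorisation of $\pi$:
\[
E_{\pi}\Vert(x,y)-(x',z)\Vert_p=E_{r(y,z)}\Big[E_{r'_{y,z}(x,x')}\Vert(x,y)-(x',z)\Vert_p\Big].
\]
Given $\varepsilon>0$, I would choose the measurable family $r'_{y,z}$ to be $\varepsilon$-optimal for the problem defining $D\big(p(x|y),q(x'|z)\big)$ in \cref{eq:dcrot}, so that the inner expectation is at most $D\big(p(x|y),q(x'|z)\big)+\varepsilon$; then the left-hand side of the previous display, which does not depend on $r$ or $r'$, is bounded by $E_{r(y,z)}\big[D(p(x|y),q(x'|z))\big]+\varepsilon$. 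Taking the infimum over $r\in\Gamma(p(y),q(z))$ and letting $\varepsilon\downarrow0$ gives
\[
\inf_{r''\in\Gamma(p(x,y),q(x',z))}E_{r''}\Vert(x,y)-(x',z)\Vert_p\ \le\ H_D(p,q),
\]
which is the claimed monotonicity.

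I expect the only delicate point to be the \emph{measurable selection} of the $\varepsilon$-optimal inner couplings $(y,z)\mapsto r'_{y,z}$, and hence the measurability of $\pi$; this follows from a standard measurable-selection theorem because the cost $\Vert(x,y)-(x',z)\Vert_p$ is continuous and the feasible set $\Gamma(p(x|y),q(x'|z))$ depends measurably on $(y,z)$. Everything else — the gluing construction, the two marginal computations, and the disintegration of the expectation — is routine. In the case of primary interest, where $p(y)$ and $q(z)$ are finite categorical (statistical mixtures), $r$ has finite support and only finitely many inner couplings are involved, so measurability is automatic and one may even take $\varepsilon=0$.
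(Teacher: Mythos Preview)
The paper states this monotonicity theorem but does not supply a proof for it; the text moves directly from the statement to the remark that the result persists when $\Vert(x,y)-(x',z)\Vert_p$ is replaced by any other metric distance. Your argument via the gluing construction is correct and is the natural way to establish the inequality: build $\pi(x,y,x',z)=r(y,z)\,r'_{y,z}(x,x')$, verify its two marginals are $p(x,y)$ and $q(x',z)$, disintegrate the expectation, and optimise. The marginal computations and the disintegration step are sound, and you rightly isolate measurable selection of the $\varepsilon$-optimal inner couplings as the only technical point (which, as you note, is vacuous in the finite-mixture case that the paper emphasises).

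One small observation: your proof nowhere uses that the ground cost is a metric; it goes through for any nonnegative measurable cost $c\big((x,y),(x',z)\big)$. This is actually a touch stronger than the paper's remark following the theorem, which singles out metric distances. So your argument not only fills the gap left by the paper but yields a slightly more general statement.
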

The above theorem is true if $\Vert(x,y)-(x',z)\Vert_p$ in \cref{eq:dcrot} and the RHS is replaced by any other metric distance.
Therefore, through the chain rule factorization of a joint distribution, CROT can give a potentially simpler expression of optimal transport, and its hierarchical structure allows one to use 1D OT problems~\citep{SlicedRadonWass-2015,SCDF-2019} which enjoys a closed-form solution~\citep{COT-2019} based on the inverse of the CDFs of the univariate densities:
$$
H_D(X,Y)= \left(\int_0^1 c_D(F_X^{-1}(u)-F_Y^{-1}(u)) \mathrm{d}u \right),
$$
where $F_X$ and $F_Y$ are the cumulative distribution functions (CDFs) of $X$ and $Y$, respectively, and $D(x,y):=c_D(x-y)$ for a convex and continuous function $C_D$.
Observe that the CROT distance is larger than the optimal transport distance.

Interestingly, the CROT distance provides an upper bound on the marginal distance $D(p(x),\,q(x))$
provided the base distance $D$ is {\em jointly convex}~\citep{ConvexBregman-2001,matBDjointcvx-2015}.

\begin{definition}[Jointly convex distance]
A distance $D(\cdot:\cdot)$ on a statistical manifold $\mathcal{M}$ is jointly convex if and only if
$$
D((1-\alpha)p_1+\alpha p_2 : (1-\alpha)q_1+\alpha q_2) \leq (1-\alpha)D(p_1:p_2)+\alpha D(p_2:q_2),\quad
\forall\alpha\in[0,1],\;p_1, p_2 \in\mathcal{M}.
$$
We write the above inequality more compactly as
$$
D((p_1p_2)_\alpha : (q_1q_2)_\alpha) \leq (D(p_1:p_2)D(p_2:q_2))_\alpha,\quad \forall\alpha\in [0,1],
$$
where $(ab)_\alpha\eqdef (1-\alpha)a+\alpha b$.
\end{definition}

\begin{theorem}[Upper Bound on Jointly Convex Distance, UBJCD]\label{thm:ubjcd}
Given a pair of joint distributions $p(x,y)$ and $q(x,y)$,
if $D(\cdot,\cdot)$ is jointly convex,
then  
$D(p(x),\,q(x)) \leq H_D(p,q)$.
\end{theorem}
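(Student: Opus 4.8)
The plan is to establish the stronger pointwise bound
$D\big(p(x),q(x)\big)\le E_{r(y,z)}\big[D(p(x|y),q(x|z))\big]$
for \emph{every} admissible coupling $r\in\Gamma(p(y),q(z))$, and then pass to the infimum over $r$. The starting observation is that both target marginals are recovered as mixtures of the conditionals against the \emph{same} coupling: since $\int r(y,z)\,\dz=p(y)$ and $\int r(y,z)\,\dy=q(z)$, the chain rule gives
\begin{align*}
p(x)&=\int p(y)\,p(x|y)\,\dy=\int\!\int p(x|y)\,r(y,z)\,\dy\,\dz,\\
q(x)&=\int q(z)\,q(x|z)\,\dz=\int\!\int q(x|z)\,r(y,z)\,\dy\,\dz.
\end{align*}
In other words, the pair $\big(p(x),q(x)\big)$ is the $r$-barycenter on $\calY^2$ of the pair-valued map $(y,z)\mapsto\big(p(x|y),q(x|z)\big)$.

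Next I would invoke a Jensen-type inequality for the jointly convex functional $D$: whenever $(p_\omega,q_\omega)$ is a family of pairs with $\omega$ distributed according to a probability measure $r$, joint convexity yields $D\big(\int p_\omega\,\mathrm{d}r,\int q_\omega\,\mathrm{d}r\big)\le\int D(p_\omega,q_\omega)\,\mathrm{d}r$. Applying this with $\omega=(y,z)$, $p_\omega=p(x|y)$, $q_\omega=q(x|z)$ and $\mathrm{d}r=r(y,z)\,\dy\,\dz$ gives exactly
$$D\big(p(x),q(x)\big)\le\int\!\int D\big(p(x|y),q(x|z)\big)\,r(y,z)\,\dy\,\dz=E_{r(y,z)}\big[D(p(x|y),q(x|z))\big].$$
Because the left-hand side does not depend on $r$ and this holds for all $r\in\Gamma(p(y),q(z))$, taking the infimum over $r$ on the right-hand side produces $D(p(x),q(x))\le H_D(p,q)$, which is the claim. (Specializing to the product coupling $r(y,z)=p(y)q(z)$ also recovers the first inequality of Property~\ref{prop:maxUB}.)

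The only step that is not entirely routine is the Jensen-type inequality in the continuous setting, since the definition of joint convexity above is stated only for two-point mixtures; this is where I expect the real work to be. I would bridge the gap in the standard way: two-point joint convexity extends by induction to arbitrary finite convex combinations $\big(\sum_i\lambda_i p_i,\sum_i\lambda_i q_i\big)$, and then a discretization argument---partition $\calY^2$, approximate $r$ by finitely supported couplings converging to it, and invoke lower semicontinuity (or continuity, which holds for all ground distances considered here, such as TV and $f$-divergences)---passes to the integral form. Equivalently, one can argue abstractly that joint convexity means $(p,q)\mapsto D(p,q)$ is a convex function on the product vector space, and a convex function evaluated at an $r$-barycenter lies at or below the $r$-average of its values. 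For the finitely supported mixtures and KDEs that are the main object of \S\ref{sec:GMM}, the elementary finite version already suffices and no limiting argument is needed.
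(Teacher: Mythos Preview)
Your proposal is correct and follows essentially the same route as the paper: rewrite both marginals as $r$-weighted mixtures of the conditionals using the \emph{same} coupling $r$, apply joint convexity of $D$ to pull the mixture outside, and then take the infimum over $r$. The paper only spells this out in the finite-mixture case (writing $m_1=\sum_{i,j}w_{ij}p_i$ and $m_2=\sum_{i,j}w_{ij}q_j$ with $W\in U(\alpha,\beta)$ and invoking joint convexity), deferring the continuous statement with ``the proof for the general case is similar''; you work directly in the integral setting and, helpfully, isolate the passage from two-point joint convexity to the integral Jensen inequality as the one non-routine step, which the paper does not comment on.
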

Notice that $H_D(p,q)\not=H_D(q,p)$ for an asymmetric base distance $D$.

Let us give some examples of jointly convex distances:
\ding{192}
The $f$-divergences \citep{Osterreicher-2003} $I_f(p:q)=\int p(x)f(q(x)/p(x)) \mathrm{d}x$ (for a convex generator $f(u)$ satisfying $f(1)=0$ and strictly convex at $1$);
\ding{193}
The $p$-powered Wasserstein distances~\citep{WassersteinJC-2018} $W_p^p$;
\ding{194}
The R\'enyi divergences~\citep{Renyi-2014} for $\alpha\in [0,1]$;
\ding{195}
Bregman divergences~(\citep{VBVD-2007} ,\citealt{BorweinConvex-2010}; Exercises 2.3.29 and 2.3.30) provided that the generator $F$ satisfies
 $\nabla^2 F(y)+\nabla^3 F(y)(y-x)\succeq (\nabla^2F(x)\nabla^2F)^{-1}(y)$ where $\succeq$ denotes the L\"owner ordering of positive-definite matrices.
\ding{196} A generalized divergence related to Tsallis divergence~\cite{GenTsallisDiv-2019}.

A jointly convex function is separately convex but the converse is false.
However, a separately convex bivariate function that is positively homogeneous of degree one is jointly convex (but this result does not hold in higher dimensions;~\citealt{perspectivefunc-2008})
Conversely, CROT yields a lower bound for jointly concave distances (e.g., fidelity in quantum computing;~\citealt{QCQI-2002}).

\section{SCROT: Fast Sinkhorn CROT\label{sec:CROTmix}}

Consider two finite statistical mixtures $m_1(x)=\sum_{i=1}^{k_1} \alpha_i p_i(x)$ and
$m_2(x)=\sum_{i=1}^{k_2} \beta_i q_i(x)$, not necessarily homogeneous nor of the same type.
Let $[k]\eqdef \{1,\ldots, k\}$.
The MCOT distance proposed by~\cite{OTWeight-2000}
amounts to solve a {\em Linear Program} (LP) problem.

By defining $U(\alpha,\beta)$ to be set of non-negative matrices $W=[w_{ij}]$ with $\sum_{l=1}^{k_2} w_{il}=\alpha_i$
and $\sum_{l=1}^{k_1} w_{lj}=\beta_j$ (transport polytope;~\citealt{Sinkhorn-2013}), we get the equivalent compact definition of MCOT (that is a special case of CROT):
\begin{equation}
H_D(m_1,m_2) = \min_{W\in U(\alpha,\beta)} \sum_{i=1}^{k_1}\sum_{j=1}^{k_2} w_{ij} D(p_i,q_j).
\end{equation}
In general, the LP problem (with $k_1\times k_2$ variables and inequalities, $k_1+k_2$ equalities whom $k_1+k_2-1$ are independent) delivers an optimal soft assignment
of mixture components with exactly $k_1+k_2-1$ nonzero coefficients\footnote{A LP in $d$-dimensions has its solution located at a vertex of a polytope, described by the intersection of $d+1$ hyperplanes (linear constraints).} in matrix $W=[w_{ij}]$.
The complexity of linear programming~\citep{LP-2018} in $n$ variables with $b$ bits using Karmarkar's interior point methods is polynomial, in $O(n^{\frac{7}{2}}b^2)$.

Observe that we necessarily have:
$
\max_{j\in[k_2]} w_{ij} \geq \frac{\alpha_i}{k_2},
$
and similarly that:
$
\max_{i\in[k_1]} w_{ij} \geq \frac{\beta_j}{k_1}.
$
Note that $H(m,m)=0$ since $w_{ij}=D_{ij}$ where  $D_{ij}$ denotes the Kr\"onecker symbol:
$D_{ij}=1$ iff $i=j$, and $0$ otherwise.
We can interpret MCOT as a {\em Discrete Optimal Transport} (DOT) between (non-embedded) histograms.
When $k_1=k_2=d$, the transport polytope is the polyhedral set of non-negative $d\times d$ matrices:
$$
U(\alpha,\beta)=\{P \in \bbR_+^{d\times d} \st P 1_d=\alpha, P^\top 1_d=\beta\},
$$
and
$$
H_D(m_1:m_2) = \min_{P\in U(\alpha,\beta)} \inner{P}{W},
$$
where $\inner{A}{B}=\tr(A^\top B)$ is the Fr\"obenius inner product of matrices, and $\tr(A)$ the matrix trace.
This OT can be calculated using the network simplex in $O(d^3\log d)$ time.
Cuturi~\citeyearpar{Sinkhorn-2013} showed how to relax the objective function in order to get fast calculation using the Sinkhorn divergence:
\begin{equation}\label{eq:sd}
S_D(m_1:m_2) = \min_{P\in U_\lambda(\alpha,\beta)} \inner{P}{W},
\end{equation}
where 
$$
U_\lambda(\alpha,\beta)\eqdef \{P\in U(\alpha,\beta) \st \KL(P:\alpha\beta^\top) \leq \lambda \}.
$$
The KLD between two $k\times k$ matrices $M=[m_{i,j}]$ and $M'=[m_{i,j}']$ is defined by
$$
\KL(M:M')\eqdef\sum_{i,j}  m_{i,j}\log\frac{m_{i,j}}{m_{i,j}'},
$$
with the convention that $0\log \frac{0}{0}=0$.
The Sinkhorn divergence is calculated using the equivalent dual Sinkhorn divergence by using matrix scaling algorithms (e.g., the Sinkhorn-Knopp algorithm).
Because the minimization is performed on $U_\lambda(\alpha,\beta)\subset U(\alpha,\beta)$, we have
$$
H_D(m_1,m_2)   \leq S_D(m_1,m_2).
$$

Notice that the smooth (dual) Sinkhorn divergence has also been shown experimentally  to improve over
the EMD in applications (MNIST classification;~\citealt{Sinkhorn-2013}).

\subsection{CROT upper bounds on distance between statistical  mixtures\label{sec:ubcrot}}

First, let us report the basic upper bounds for MCOT mentioned earlier in Property~\ref{prop:maxUB}.
The objective function is upper bounded by:
\begin{equation}
H(m_1,m_2) \leq \sum_{i=1}^{k_1}\sum_{j=1}^{k_2} \alpha_i\beta_j D(p_i,q_j) \leq \max_{i\in [k_1],j\in [k_2]}  D(p_i,q_j).
\end{equation}
Now, when the conditional density distance $D$ is {\em separate convex} (i.e., meaning convex in both arguments),
we get the following {\em{}Separate Convexity Upper Bound}:
\begin{equation}\label{sec:SCUB}
\text{(SCUB)}\quad D(m_1:m_2) \leq \sum_{i=1}^{k_1} \sum_{j=1}^{k_2} \alpha_i\beta_j D(p_i:q_j).
\end{equation}
For example, norm-induced distances or $f$-divergences~\citep{fdiv-2014} are separate convex distances.
For the particular case of the KLD, we have:
$
\KL(p:q) \eqdef \int p(x)\log \frac{p(x)}{q(x)} \mathrm{d}x,
$
and when $k_1=k_2$, we get the following upper bound
using the log-sum inequality~\citep{FastKL-2003,wmixture-2017}:

\begin{equation}\label{eq:logsumUB}
\KL(m_1:m_2) \leq  \KL(\alpha:\beta)+\sum_{i=1}^k \alpha_i \KL(p_i:q_i),
\end{equation}

Since this holds for {\em any} permutation of $\sigma$ of mixture components, we can tight this upper bound by minimizing over all permutations $\sigma$:
\begin{equation}
\KL(m_1:m_2) \leq  \min_\sigma \KL(\alpha:\sigma(\beta))+\sum_{i=1}^k \alpha_i \KL(p_i:\sigma(q_i)).
\end{equation}

The best permutation $\sigma$ can be computed using the Hungarian algorithm~\citep{KLGMM-1999,GMM-UBM-2000,KLGMM-2003,GMMHungarian-2005}
 in cubic time (with cost matrix $C=[c_{ij}]$, and  $c_{ij} = \kl(\alpha_i : \beta_j) + \alpha_i\KL(p_i:q_j)$ with $\kl(a:b)=a\log\frac{a}{b}$).

Now, let us further rewrite  
$$
m_1(x)=\sum_{i=1}^{k_1} \sum_{j=1}^{k_2} w_{i,j} p_i(x)
$$ with $\sum_{j=1}^{k_2} w_{i,j}=\alpha_i$,
and 
$$
m_2(x)=\sum_{i=1}^{k_1} \sum_{j=1}^{k_2} w_{i,j}' q_j(x)
$$ 
with $\sum_{i=1}^{k_1} w_{i,j}'=\beta_j$.
That is, we can interpret 
$$
m_1(x)=\sum_{i=1}^{k_1} \sum_{j=1}^{k_2} w_{i,j} p_{i,j}(x)
$$ 
and 
$$
m_2(x)=\sum_{i=1}^{k_1} \sum_{j=1}^{k_2} w_{i,j}' q_{i,j}(x)
$$ 
as mixtures of $k=k_1\times k_2$ (redundant) components $\{p_{i,j}(x)=p_i(x)\}$ and $\{q_{i,j}(x)=q_j(x)\}$, and apply the upper bound of Eq.~\ref{eq:logsumUB} for the ``best split'' of matching mixture components $\sum_{j=1}^{k_2} w_{i,j} p_i(x) \leftrightarrow \sum_{j=1}^{k_1} w_{j,i}' q_i(x)$:

\begin{eqnarray*}
\lefteqn{\KL(m_1:m_2) \leq O(m_1:m_2) \leq }\\
&& \sum_{i=1}^{k_1} \sum_{j=1}^{k_2} w_{i,j}\log \left(\frac{w_{i,j}}{w_{j,i}'}\right)  + H_\KL(m_1,m_2),
\end{eqnarray*}
where
\begin{align}\label{eq:OD}
O(m_1:m_2)= \min_{w\in U(\alpha,\beta)} & \sum_{i=1}^{k_1} \sum_{j=1}^{k_2} w_{i,j}\log\left( \frac{w_{i,j}}{w_{j,i}'}\right)\nonumber\\
+ & \sum_{i=1}^{k_1} \sum_{j=1}^{k_2} w_{ij} \KL(p_i:q_j).
\end{align}

Thus CROT allows to upper bound the KLD between mixtures.
The technique of rewriting mixtures as mixtures of $k=k_1\times k_2$ redundant components bears some resemblance with the variational upper bound on the KL divergence between mixtures proposed by~\cite{KLGMMVA-2007} that requires to iterate until convergence an update of the variational upper bound. See also~\cite{OTGMM-2019} for another recent work further pushing that research direction and discussing displacement interpolation and barycenter calculations for Gaussian Mixture Models (GMMs).
We note that this framework  also applies to or semi-parametric mixtures obtained from Kernel Density Estimators (KDEs;~\citealt{MixtKDE-2013}).

\section{Experiments\label{sec:exp}}
We   study experimentally the tightness of the CROT upper bound $H_D$ and SCROT upper bound $S_D$  on $D$ between GMMs for the total variation (\S\ref{exp:TV}), Wasserstein $W_p$ (\S\ref{exp:Wass}) and R\'enyi distances (\S\ref{exp:Renyi}). In~\S\ref{sec:GMM} we shall further demonstrate how to learn GMMs by minimizing the SCROT distance.

\subsection{Total Variation distance}\label{exp:TV}

Since $\TV$ is a metric $f$-divergence~\citep{fdivmetric-2007} bounded in $[0,1]$, so is MCOT.
The closed-form formula for the total variation between univariate Gaussian distributions
is reported by~\cite{GenChernoff-2014} using the erf function, and the other formula for
the total variation between Rayleigh distributions and Gamma distributions are given in~\cite{TVMM-2018}.

Figure~\ref{fig:tvcrot} illustrates the performances of the various lower/upper bounds on the total variation between mixtures of Gaussian, Gamma, and Rayleigh distributions with respect to the true value which is estimated using Monte Carlo samplings (consistent estimations).

The acronyms of the various bounds are as follows:
CELB: Combinatorial Envelope Lower Bound~(\citealt{distlse-2016}; applies only for 1D mixtures);
CEUB: Combinatorial Envelope Upper Bound~(\citealt{distlse-2016}; applies only for 1D mixtures);
CGQLB: Coarse-Grained Quantization Lower Bound~\citep{distlse-2016} for $1000$ bins (applies only for $f$-divergences that satisfy the information monotonicity property);
CROT: Chain Rule Optimal Transport $H_D$ (this paper);
Sinkhorn CROT: Entropy-regularized CROT~\citep{Sinkhorn-2013} $S_D\leq H_D$, with $\lambda=1$ and $\epsilon=10^{-8}$ (for convergence of the Sinkhorn-Knopp iterative matrix scaling algorithm).


Next, we consider the renown MNIST handwritten digit database~\citep{MNIST-1998} of 70,000 handwritten digit $28\times 28$ grey images
and the Fashion-MNIST images with exactly the same sample size and dimensions but different image contents~\citep{xiao2017}.
We first use PCA to reduce the original dimensionality $d=28\times 28=784$ to $D\in\{10,50\}$.
Then we extract two subsets of samples, and estimate respectively two GMMs composed
of 10 multivariate Gaussian distributions with a diagonal covariance matrix.
The GMMs are learned by the Expectation-Maximization (EM) algorithm implementation of {\sf scikit-learn}~\citep{scikit-2011}.
Notice that we did not use the labels in our estimation, and
therefore the mixture components do not necessarily correspond to different digits.

We approximate the TV between $D$-dimensional GMMs using Monte Carlo by performing stochastic integration of the following integrals:
\begin{align*}
&
\TV(p,q)
\;\eqdef\;\frac{1}{2} \int \vert p(x) - q(x) \vert \dx = \\
&
\frac{1}{2m} \sum_{x_i\sim{}p(x)}
\frac{1-\exp(r(x_i))}{1+\exp(r(x_i))}
+\frac{1}{2m} \sum_{y_i\sim{}q(x)}
\frac{1-\exp(r(y_i))}{1+\exp(r(y_i))},
\end{align*}
where $\{x_i\}_{i=1}^m$ and $\{y_i\}_{i=1}^m$
are i.i.d. samples drawn from $p(x)$ and $q(x)$, respectively,
and $r(x)=\vert \log{p}(x) - \log{q}(x) \vert$.
In our experiments, we set $m=0.5\times10^4$.

To compute the CROT, we use the EMD and Sinkhorn implementations provided by the
Python Optimal Transport, {\sf POT}, library~\citep{POT-2017}. For Sinkhorn, we set the
entropy regularization strength as follows:
Sinkhorn (1) means $\mathrm{median}(M)$ and
Sinkhorn (10) means $\mathrm{median}(M)/10$,
where
$M$ is the metric cost matrix. For example, to compute CROT-TV,
$M$ is the pairwise TV distance matrix from all components
in the first mixture model to all components in the second mixture.
The maximum number of Sinkhorn iterations is $1000$, with a stop threshold of $10^{-10}$.

To get some intuitions, see Figure~\ref{fig:tvmnist} for the cost matrix
and the corresponding optimal transport matrix, where the cost is
defined by TV distance, and the dataset is PCA-processed MNIST.
We see that the transportation scheme tries to assign higher weights
to small cost pairs (blue region in the cost matrix).

\begin{figure}
\includegraphics[width=\textwidth]{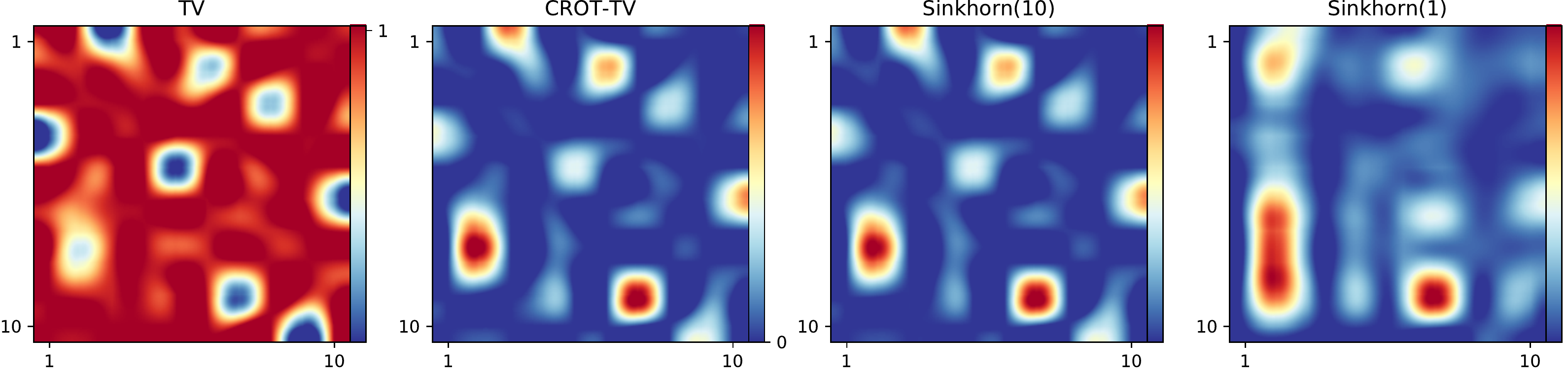}

\caption{TV distance between two $10$-component GMMs estimated on the MNIST dataset:
(1) shows the $10\times 10$ matrix TV distance between the first mixture components and the second mixture components
(red means large distance and blue means a small distance).
 (2-4) displays the $10\times 10$ optimal transport matrix $W$
 (red means larger weights, blue means smaller weights).
The optimal transport matrix is estimated by
EMD (2), the Sinkhorn algorithm with weak regularization (3) and the Sinkhorn with
strong regularization (4).}\label{fig:tvmnist}
\end{figure}

Figure 3(1) shows the 10x10 TV distance between mm1's components and mm2's components
red means large distance, blue means a small distance
Figure 3

\begin{table}[t]
\centering
\caption{TV distances between two GMMs with $10$ components each estimated on PCA-processed images.
$D$ is the dimensionality of the PCA. The two GMMs are estimated based on non-overlapping samples,
with the parameter $0<\tau\le1$ specifying the relative sample size used to estimated the GMMs.
For example, $\tau=1$ means each GMM is estimated on half of all available images.
Sinkhorn ($\lambda$) denotes the CROT distance estimated by
the Sinkhorn algorithm, where the regularization strength is proportional to $1/\lambda$.
For each configuration, the two GMMs are repeatedly estimated based on $100$ pairs of
random subsets of the full dataset, with the mean and standard deviation reported.
\label{tab:tvmnist}}

\noindent\scalebox{0.8}{
\noindent\vtop{\noindent
\begin{tabular}{ccc|ccccc}
\hline
Data & $D$ & $\tau$ & TV & CROT-TV & Sinkhorn ($10$) & Sinkhorn ($1$)\\
\hline
\multirow{4}{*}{MNIST}
& $10$ & $1$ & $0.16\pm0.08$ & $0.26\pm0.14$ & $0.27\pm0.14$ & $0.78\pm0.05$\\
& $10$ & $0.1$ & $0.29\pm0.05$ & $0.43\pm0.08$ & $0.44\pm0.08$ & $0.84\pm0.02$\\
& $50$ & $1$ & $0.35\pm0.08$ & $0.43\pm0.10$ & $0.44\pm0.10$ & $0.78\pm0.03$\\
& $50$ & $0.1$ & $0.54\pm0.04$ & $0.64\pm0.05$ & $0.67\pm0.06$ & $0.84\pm0.02$\\
\hline
& $10$ & $1$ & $0.19\pm0.09$ & $0.23\pm0.12$ & $0.24\pm0.12$ & $0.81\pm0.03$\\
Fashion
& $10$ & $0.1$ & $0.33\pm0.07$ & $0.40\pm0.09$ & $0.40\pm0.09$ & $0.86\pm0.02$\\
MNIST
& $50$ & $1$ & $0.44\pm0.11$ & $0.48\pm0.12$ & $0.50\pm0.13$ & $0.88\pm0.03$\\
& $50$ & $0.1$ & $0.60\pm0.07$ & $0.64\pm0.08$ & $0.67\pm0.09$ & $0.92\pm0.02$\\
\hline
\end{tabular}
}}
\end{table}

\def\ttt{1.0}
\begin{figure}
\centering
\includegraphics[width=\ttt\columnwidth]{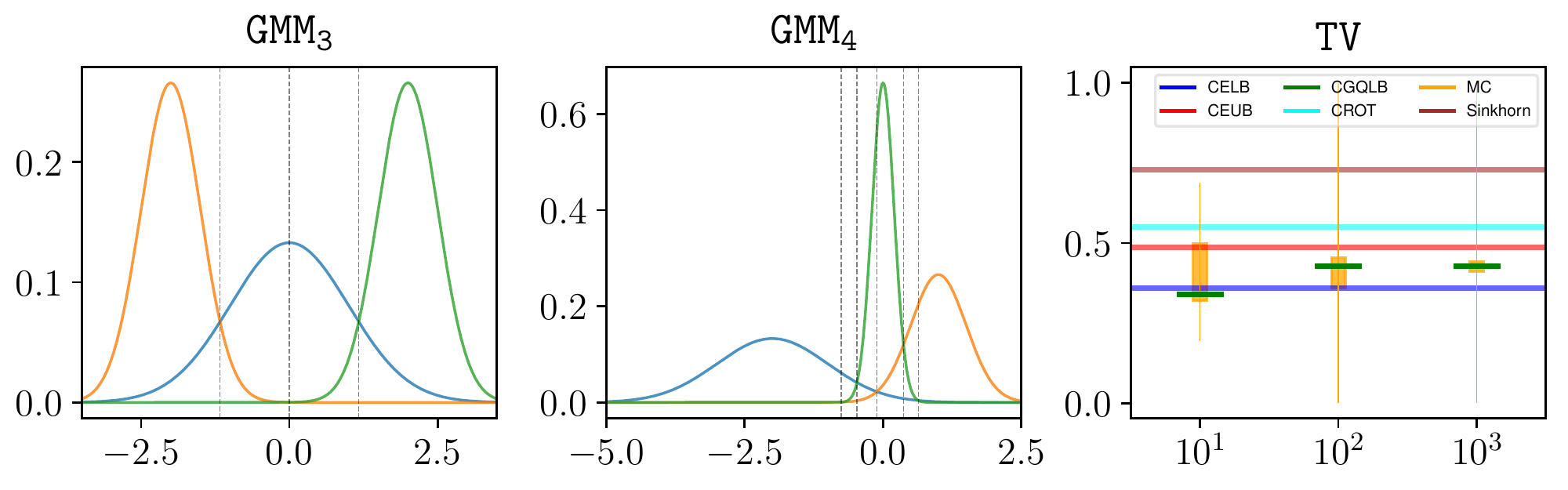}\\
\includegraphics[width=\ttt\columnwidth]{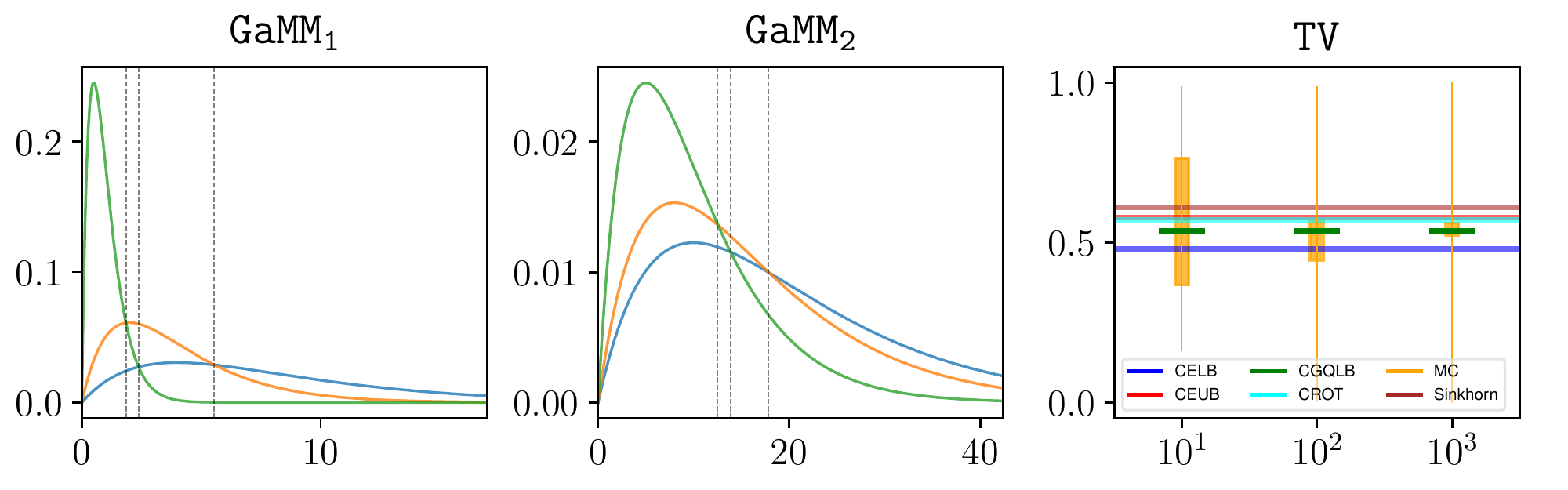}\\
\includegraphics[width=\ttt\columnwidth]{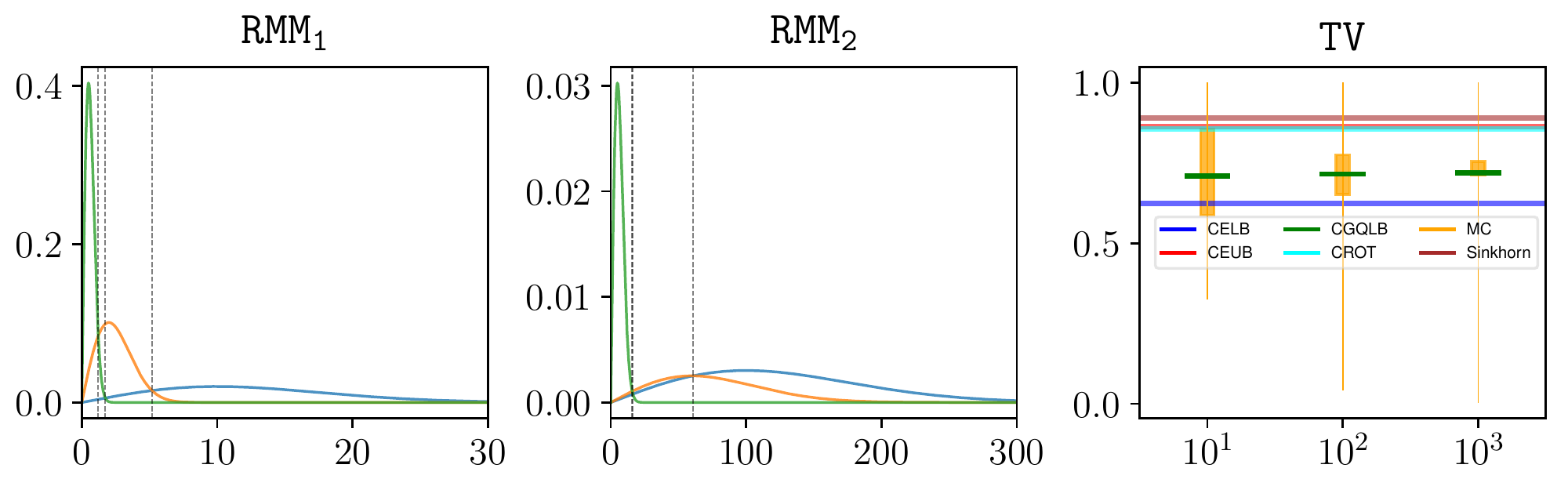}\\
\caption{Performance of the CROT distance and the Sinkhorn CROT distance for upper bounding the total variation distance between mixtures of (1) Gaussian, (2) Gamma, and (3) Rayleigh distributions.}%
\label{fig:tvcrot}
\end{figure}

Our experiments yield the following observations:
As the sample size $\tau$ decreases, the TV distances between GMMs
turn larger because the GMMs are
pulled towards the two different empirical distributions.
As the dimension $D$ increases, TV increases because in a high dimensional space the
GMM components are less likely to overlap.
We check that CROT-TV is an upper bound of TV.
We verify that Sinkhorn divergences are upper bounds of CROT.
These observations are consistent across two data sets.
The distances of Fashion-MNIST are in general larger than the corresponding distances in MNIST,
which can be intuitively explained by that the ``data manifold'' of Fashion-MNIST
has a more complicated structure than MNIST.

\begin{table}
\centering
\caption{$W_2$ distances between two 10-component GMMs estimated on PCA-processed images.\label{tab:w2mnist}}
\scalebox{0.8}{
\centering
\begin{tabular}{ccc|ccccc}
\hline
Data & $D$ & $\tau$ & $\mathrm{UB}(W_2)$ & $\mathrm{LB}(W_2)$ & $\sqrt{\text{CROT-}W_2^2}$ & Sinkhorn (10) & Sinkhorn (1)\\
\hline
\multirow{4}{*}{MNIST}
& $10$ & $1$ & $1.91\pm0.02$ & $0.03\pm0.00$ & $0.84\pm0.57$ & $0.88\pm0.58$ & $7.13\pm0.11$\\
& $10$ & $0.1$ & $1.93\pm0.02$ & $0.09\pm0.02$ & $1.48\pm0.38$ & $1.54\pm0.39$ & $7.29\pm0.11$\\
& $50$ & $1$ & $7.51\pm0.03$ & $0.07\pm0.01$ & $2.17\pm0.93$ & $2.39\pm0.97$ & $12.02\pm0.15$\\
& $50$ & $0.1$ & $7.53\pm0.04$ & $0.21\pm0.02$ & $4.04\pm0.86$ & $4.33\pm0.91$ & $12.69\pm0.22$\\
\hline
& $10$ & $1$ & $1.71\pm0.05$ & $0.03\pm0.01$ & $1.19\pm0.62$ & $1.24\pm0.63$ & $10.36\pm0.08$\\
Fashion
& $10$ & $0.1$ & $1.74\pm0.05$ & $0.10\pm0.02$ & $1.61\pm0.63$ & $1.68\pm0.64$ & $10.43\pm0.15$\\
MNIST
& $50$ & $1$ & $7.47\pm0.04$ & $0.07\pm0.01$ & $3.12\pm1.01$ & $3.21\pm1.02$ & $15.31\pm0.20$\\
& $50$ & $0.1$ & $7.50\pm0.04$ & $0.22\pm0.02$ & $4.32\pm1.02$ & $4.45\pm1.05$ & $15.99\pm0.29$\\
\hline
\end{tabular}
}
\end{table}

\subsection{Wasserstein $W_p$ CROT on GMMs}\label{exp:Wass}

The $p$-th power of the $L_p$-Wasserstein distance, $W_p^p$, is jointly convex for $p\geq 1$  (see Eq. 20, p. 6,~\citealt{WassersteinJC-2018}).
Thus we can apply the CROT distance between two GMMs $m_1$ and $m_2$ to get the following upper bound:
$W_p(m_1,m_2) \leq  H_{W_p^p}^{\frac{1}{p}}(m_1,m_2)$, $\alpha\geq1$.
We also have $W_p\leq W_q$ for $1\leq p\leq q<\infty$.

The OT distance $W_2$ between Gaussian measures~\citep{OTElliptical-1982,WassersteinGaussian-2011} is available in closed-form:
\begin{eqnarray*}
\lefteqn{W_2(N(\mu_1,\Sigma_1),N(\mu_1,\Sigma_1)) = }\\
&
\sqrt{
\|\mu_1-\mu_2\|^2 + \tr(\Sigma_1+\Sigma_2-2(\Sigma_1^{\frac{1}{2}}\Sigma_2\Sigma_1^{\frac{1}{2}})^{\frac{1}{2}})
}.
\end{eqnarray*}

This  $H_{W_p^p}^{\frac{1}{p}}$ CROT distance generalizes~\cite{OTGMM-2019}
who considered the $W_2$ distance between GMMs using discrete OT.
They proved that $H_{W_2}(m_1,m_2)$ is a metric, and $W_2(m_1,m_2)\leq \sqrt{H_{W_2^2}(m_1,m_2)}$.
These results generalize to mixture of elliptical distributions~\citep{OTElliptical-1982}.
However, we do not know a closed-form formula for $W_p$ between Gaussian measures when $p\not =2$.

Given two high-dimensional mixture models $m_1$ and $m_2$,
we draw respectively $n$ i.i.d. samples from $m_1$ and $m_2$, so that
$m_1(x) \approx \frac{1}{n} \sum_{i=1}^n D(x_i)$ and
$m_2(x) \approx \frac{1}{n} \sum_{j=1}^n D(y_j)$.
Then, we have
\begin{eqnarray}\label{eq:wpub}
W_p(m_1,m_2)
&\approx&
W_p\left( \frac{1}{n} \sum_{i=1}^n D(x_i), \frac{1}{n} \sum_{j=1}^n D(y_j) \right)\\
&\le&
H^{1/p}_{W_{p}^p}
\left( \frac{1}{n} \sum_{i=1}^n D(x_i), \frac{1}{n} \sum_{j=1}^n D(y_j) \right).\nonumber
\end{eqnarray}
Note that $W_p\left(D(x_i),D(x_j)\right)=\Vert{}x_i-x_j\Vert_2$
and therefore the RHS of \ref{eq:wpub} can be evaluated.
We use $\mathrm{UB}(W_2)$ to denote this empirical upper bound that will hold if $n\to\infty$.
In our experiments $n=10^3$.

See Table~\ref{tab:w2mnist} for the $W_2$ distances evaluated on the two
investigated data sets.
The column $\mathrm{LB}(W_2)$ is a lower bound based on the first
and second moments of the mixture models~\citep{Gelbrich-1990}.
We can clearly see that $\sqrt{H_{W_2^2}}$ provides a tighter upper bound
than $\mathrm{UB}(W_2)$. To compute $\mathrm{UB}(W_2)$ one need to draw
a potentially large number of random samples to make the approximation
in \ref{eq:wpub}, and the computation of the EMD is costly.
Therefore one should use $\sqrt{H_{W_2^2}}$ for its better and more efficient approximation.

\begin{table}
\centering
\caption{R\'enyi divergences between two $10$-component GMMs estimated on PCA-processed images.\label{tab:renyimnist}}
\noindent
\scalebox{0.8}{
\begin{tabular}{ccc|ccccc}
\hline
Data & $D$ & $\tau$ & $R_{\alpha}$ & $\text{CROT-}R_{\alpha}$ & Sinkhorn (10) & Sinkhorn (1)\\
\hline
& $10$ & $1$ & $0.01\pm0.01$ & $0.07\pm0.05$ & $0.08\pm0.05$ & $0.80\pm0.02$\\
MNIST
& $10$ & $0.1$ & $0.03\pm0.02$ & $0.15\pm0.04$ & $0.16\pm0.04$ & $0.84\pm0.04$\\
$R_{0.1}$
& $50$ & $1$ & $0.09\pm0.06$ & $0.25\pm0.09$ & $0.29\pm0.10$ & $1.40\pm0.07$\\
& $50$ & $0.1$ & $0.18\pm0.09$ & $0.42\pm0.09$ & $0.46\pm0.10$ & $1.43\pm0.09$\\
\hline
& $10$ & $1$ & $0.04\pm0.03$ & $0.11\pm0.06$ & $0.12\pm0.06$ & $1.59\pm0.05$\\
Fashion
& $10$ & $0.1$ & $0.06\pm0.03$ & $0.18\pm0.07$ & $0.19\pm0.07$ & $1.65\pm0.07$\\
MNIST
& $50$ & $1$ & $0.12\pm0.08$ & $0.30\pm0.11$ & $0.32\pm0.11$ & $2.37\pm0.08$\\
$R_{0.1}$
& $50$ & $0.1$ & $0.20\pm0.11$ & $0.45\pm0.10$ & $0.47\pm0.10$ & $2.41\pm0.10$\\
\hline
\hline
& $10$ & $1$ & $0.06\pm0.05$ & $0.34\pm0.23$ & $0.37\pm0.22$ & $4.09\pm0.12$\\
MNIST
& $10$ & $0.1$ & $0.17\pm0.05$ & $0.67\pm0.18$ & $0.72\pm0.18$ & $4.22\pm0.10$\\
$R_{0.5}$
& $50$ & $1$ & $0.31\pm0.13$ & $1.07\pm0.41$ & $1.28\pm0.43$ & $6.73\pm0.31$\\
& $50$ & $0.1$ & $0.69\pm0.14$ & $1.92\pm0.40$ & $2.16\pm0.42$ & $7.01\pm0.33$\\
\hline
& $10$ & $1$ & $0.17\pm0.12$ & $0.52\pm0.29$ & $0.55\pm0.29$ & $7.54\pm0.14$\\
Fashion
& $10$ & $0.1$ & $0.28\pm0.13$ & $0.87\pm0.28$ & $0.92\pm0.29$ & $7.79\pm0.23$\\
MNIST
& $50$ & $1$ & $0.54\pm0.24$ & $1.45\pm0.48$ & $1.55\pm0.48$ & $10.53\pm0.26$\\
$R_{0.5}$
& $50$ & $0.1$ & $0.89\pm0.21$ & $2.16\pm0.39$ & $2.27\pm0.40$ & $10.79\pm0.38$\\
\hline
\hline
& $10$ & $1$ & $0.14\pm0.09$ & $0.76\pm0.42$ & $0.80\pm0.42$ & $7.18\pm0.19$\\
MNIST
& $10$ & $0.1$ & $0.31\pm0.09$ & $1.35\pm0.37$ & $1.42\pm0.37$ & $7.53\pm0.35$\\
$R_{0.9}$
& $50$ & $1$ & $0.61\pm0.32$ & $1.90\pm0.82$ & $2.25\pm0.85$ & $12.46\pm0.66$\\
& $50$ & $0.1$ & $1.33\pm0.30$ & $3.51\pm0.80$ & $3.90\pm0.82$ & $12.96\pm0.86$\\
\hline
& $10$ & $1$ & $0.32\pm0.23$ & $1.07\pm0.60$ & $1.12\pm0.61$ & $14.25\pm0.38$\\
Fashion
& $10$ & $0.1$ & $0.50\pm0.26$ & $1.69\pm0.66$ & $1.77\pm0.67$ & $14.74\pm0.54$\\
MNIST
& $50$ & $1$ & $1.07\pm0.43$ & $2.76\pm0.96$ & $2.93\pm0.97$ & $21.41\pm0.78$\\
$R_{0.9}$
& $50$ & $0.1$ & $1.76\pm0.45$ & $4.18\pm1.06$ & $4.40\pm1.09$ & $22.16\pm1.02$\\
\hline
\end{tabular}
}
\end{table}

\subsection{R\'enyi CROT between GMMs}\label{exp:Renyi}
We investigate R\'enyi $\alpha$-divergence~\citep{RT-2011,SM-2011} defined by
$R_\alpha(p:q) = \frac{1}{1-\alpha}\log\int p(x)^\alpha q(x)^{1-\alpha} \dx$,
which encompasses KLD at the limit $\alpha\to1$.
Notice that for multivariate Gaussian densities $p$ and $q$,
$R_\alpha(p:q)$ can be undefined for $\alpha>1$ as the integral may diverge.
In this case the CROT-$R_\alpha$ divergence is undefined.
Table~\ref{tab:renyimnist} shows $R_\alpha$ for $\alpha\in\{0.1,0.5,0.9\}$
and the corresponding CROT estimated on MNIST and Fashion-MNIST datasets.
The observation is consistent with the other distance metrics.

\section{Learning GMMs with SCROT.KL}\label{sec:GMM}

This section performs an experimental study to learn mixture models using
SCROT. The observed data samples $\{\bm{x}_i\}_{i=1}^n$ is described by a kernel density estimator (KDE)
\begin{equation}
p(\bm{x})
= \frac{1}{n}\sum_{i=1}^n p_i(\bm{x})
= \frac{1}{n}\sum_{i=1}^n N( \bm{x}_i, \epsilon I ),
\end{equation}
where $\epsilon>0$ is a hyper parameter. We aim to learn a Gaussian mixture model
\begin{equation}
q(\bm{x})
= \sum_{i=1}^m \alpha_i q_i(\bm{x})
= \sum_{i=1}^m \alpha_i N(\bm{\mu}_i, \diag(\bm\sigma_i) ),
\end{equation}
where $\alpha_i\ge0$ ($\sum_{i=1}^m\alpha_i=1$) is the mixture weight
of $i$'s component, and diagonal covariance matrices are assumed
to reduce the number of free parameters.
Minimizing $\KL(p\,:\,q)$ gives the maximum likelihood
estimation~\citep{IG-2016}. However, the KLD between Gaussian mixture models
is known to be not having analytical form~\citep{distlse-2016}.
Therefore one has to rely on
variational bounds or the re-parametrization trick~\citep{vae} to
bound/approximate $\KL(p\,:\,q)$.
The CROT gives an alternative approach to minimize the KLD by simplifying a KDE~\citep{MixtKDE-2013}.
By \cref{thm:ubjcd}, we have $H_{\KL}(p\,:\,q) \ge \KL(p\,:\,q)$.
Therefore we minimize the upper bound $H_{\KL}(p\,:\,q)$ instead, which
can be computed conveniently as the KLD between Gaussian distributions
is in closed form. Moreover, because the mixture weights are free
parameters, the entropy-regularized optimal transport problem
is simplified into
\begin{align*}
\min_{\bm{W}} \sum_{i=1}^n \sum_{j=1}^m \bigg[ &
w_{ij} \KL\left(p_i,\,q_j\right) + \frac{1}{\lambda}w_{ij}\log{w}_{ij} \bigg],\\
\text{s.t.}\hspace{2em}& w_{ij}\ge0,\hspace{2em}\forall{i},\forall{j}\\
&\sum_{j=1}^m w_{ij} = \frac{1}{n},
\end{align*}
where $\lambda>0$ is a regularization strength parameter (same as the Sinkhorn algorithm).
By a similar analysis~\citep{Sinkhorn-2013}, the optimal weights $w_{ij}^\star$ must
satisfy
\begin{equation}\label{eq:normalize}
w_{ij}^{\star} = \frac{1}{n} \frac{\exp(-\lambda \KL(p_i,q_j))}{\sum_{j=1}^m\exp(-\lambda \KL(p_i,q_j))},
\end{equation}
We therefore minimize $ \sum_{i=1}^{n'} \sum_{j=1}^m w_{ij}^{\star} \KL(p_i,\,q_j)$
based on gradient descent on mini-batches of $n'$ samples.
We set empirically the hyper-parameter
$m=10$ (number of components),
$\lambda=0.005$ (Sinkhorn regularization parameter)
and $\epsilon=10^{-6}$ (KDE bandwidth).
Fine tuning them can potentially yields better results.
We use the training dataset to learn the $q$ distribution (GMM) and estimate the testing error
based on its distance with $\hat{p}$, a KDE \wrt the testing datasets.

\Cref{fig:mnist} shows the learning curves when estimating a 10-component-GMM
on MNIST (left) and Fashion MNIST (right).
One can observe that SCROT.KL is indeed an upper bound of KL.
Minimizing SCROT.KL can effectively learn a mixture model
on these two datasets. The resulting model achieves better testing error
as compared to \texttt{sklearn}'s EM algorithm~\citep{scikit}.  This is because we use
KDE as the data distribution, which better describes the data as compared
to the empirical distribution.
Comparatively, the KLD is larger on the Fashion MNIST dataset,
where the data distribution is more complicated and cannot be well
described by the GMM.
EM takes 2 minutes. SCROT is implemented in Tensorflow~\cite{TF-2016} using gradient descent (Adam), and takes
around 20 minutes for $100$ epochs on an Intel \texttt{i5-7300U} CPU.

In order to efficiently estimate the KLD (corresponding to ``KL'' and ``KL(EM)'' in the figure),
we use the information-theoretical bound
$H(X,Y)\le{}H(X)+H(Y)$, where $H$ denotes Shannon's entropy. Therefore
$\KL(p:q)=-H(p) - \int p(\bm{x}) \log{q}(\bm{x})\dx
\ge -H(U) - H(p_i) - \int p(\bm{x}) \log{q}(\bm{x})\dx$,
where $U=(1/n,\cdots,1/n)$ is the uniform distribution,
and the integral $\int p(\bm{x}) \log{q}(\bm{x})\dx$ is estimated by Monte-Carlo sampling.

\begin{figure}
\centering
\includegraphics[width=.8\textwidth]{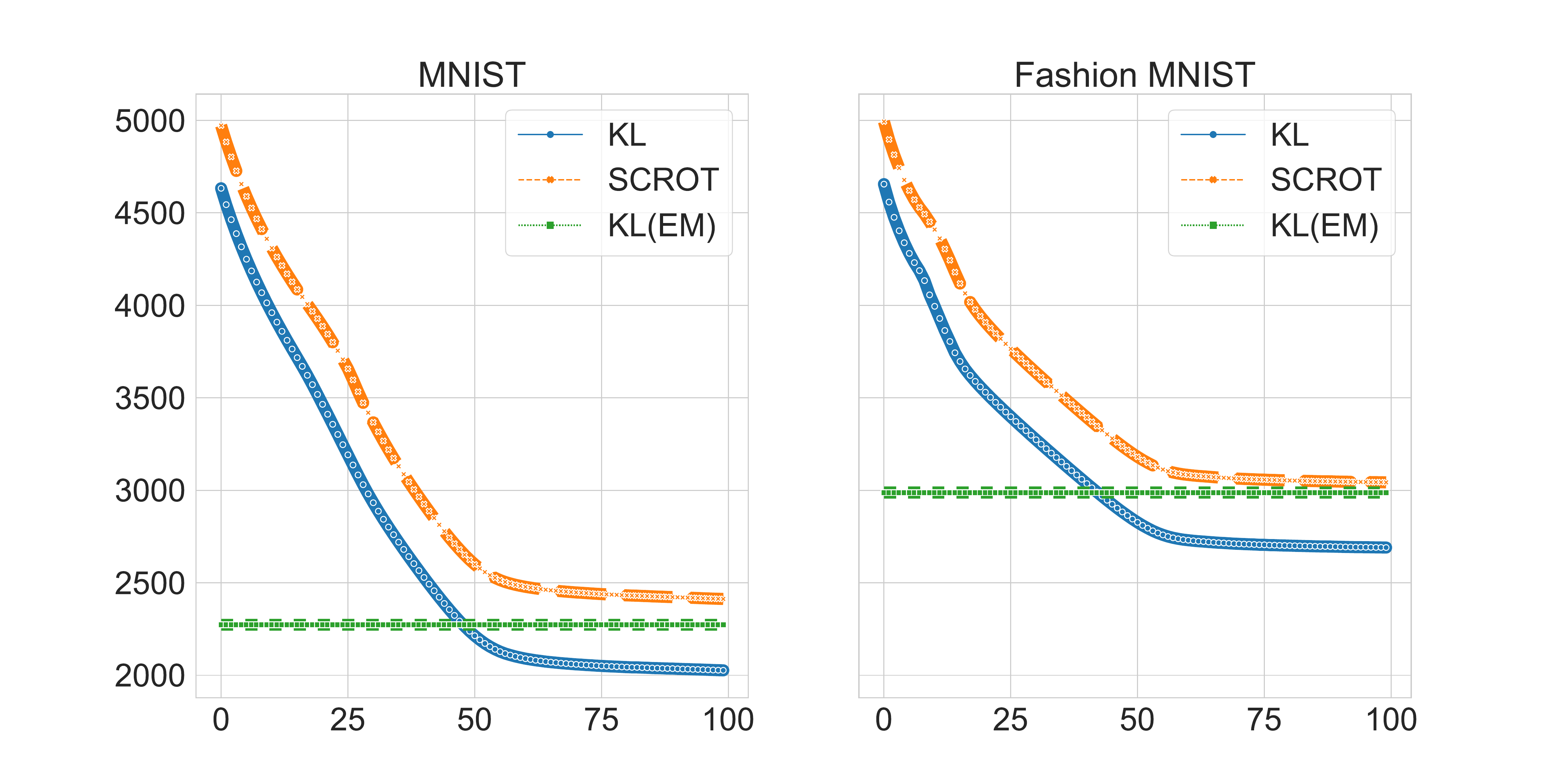}
\caption{Testing error against the number of epochs on MNIST (left) and
Fashion-MNIST (right). The curve ``KL'' shows the
estimated KLD between the data distribution (KDE based on the testing dataset)
and the learned GMM. The curve ``SCROT'' shows the SCROT distance (the learning cost function).
The curve ``KL(EM)'' shows the KLD between the data distribution
and a GMM learned using \texttt{sklearn}'s EM algorithm.\label{fig:mnist}}
\end{figure}

\section{Conclusion}

We defined the generic {\em Chain Rule Optimal Transport} (CROT) distance (Definition~\ref{def:CROT}) $H_D$ for any ground distance $D$. CROT unifies and generalizes the Wasserstein/EMD distance between discrete measures~\citealt{EMD-2000} and the {\em Mixture Component Optimal Transport}~\citep{OTWeight-2000} distance.
We proved that $H_D$ is a metric whenever $D$ is a metric (Theorem~\ref{thm:CROTmetric}).
We then dealt with statistical mixtures, and showed that $H_D(m_1,m_2)\geq D(m_1,m_2)$ (Theorem~\ref{thm:ubjcd}) whenever $D$ is jointly convex,
and considered the smooth Sinkhorn CROT distance $S_D(m_1,m_2)$ (SCROT) for fast calculations of $H_D(m_1,m_2)$ via matrix scaling algorithms (Sinkhorn-Knopp algorithm) so that
 $D(m_1,m_2)\leq H_D(m_1,m_2)\leq S_D(m_1,m_2)$.
These bounds hold in particular for statistical $f$-divergences $I_f(p:q)=\int p(x)f(q(x)/p(x)) \dx$ which includes the Kullback-Leibler divergence).
Finally, we proposed a novel efficient method to learn Gaussian mixture models from a semi-SCROT distance that bypasses Sinkhorn iterations and uses a simple normalization (Eq.~\ref{eq:normalize}). Our learning method by KDE simplification is shown to outperform the EM algorithm of {\tt sklearn} for the MNIST and Fashion MNIST datasets.

\section*{Acknowledgments}
Frank Nielsen thanks Professor Steve Huntsman for pointing out reference~\cite{OTWeight-2000} to his attention.
The authors are grateful to Professor Patrick Forr\'e (University of Amsterdam) for letting us know of an earlier error in the definition of CROT, and to Professor R\"uschendorf for sending us his work~\cite{ruschendorf}.

\bibliography{CROT}

\appendix

\section{Proof of CROT Metric (Theorem~\ref{thm:CROTmetric})}  %

\begin{proof}
We prove that $H(p,q)$ satisfies the following axioms of metric distances:

\begin{description}
\item[Non-negativity.]
As $D\bigg(p(x|y), q(x|z)\bigg)\ge0$, we have by definition that $H_D(p,q)\ge0$.

\item[Law of indiscernibles.]
If $H_D(p,q)=0$,
then $\forall\epsilon>0$,
$\exists{r}^\star\in\Gamma(p(y),q(z))$, such that
\begin{equation*}
E_{r^\star(y,z)} D\left(p(x|y),q(x|z)\right)<\epsilon.
\end{equation*}
As $D(\cdot,\cdot)$ is a metric, the density $r^\star(y,z)$ is concentrated on the region $p(x|y)=q(x|z)$
so that
\begin{equation*}
\int r^\star(y,z) p(x|y) \dy\dz
=
\int r^\star(y,z) q(x|z) \dy\dz.
\end{equation*}
We therefore have
\begin{align*}
p(x)
&=
\int p(y)p(x|y) \dy
=
\int r^\star(y,z) \dz p(x|y) \dy
=
\int r^\star(y,z) p(x|y) \dy\dz\nonumber\\
&=
\int r^\star (y,z) q(x|z) \dy\dz
=
\int r^\star (y,z) \dy q(x|z) \dz
=
\int q(z) q(x|z) \dz\nonumber\\
&=
q(x).
\end{align*}

\item[Symmetry.]
\begin{align*}
H_D(p,q) &=
\inf_{r\in\Gamma(p(y),q(z))}
\int r(y,z) D\bigg(p(x|y), q(x|z)\bigg)\,\dy\dz\nonumber\\
&=
\inf_{r\in\Gamma(p(y),q(z))}
\int r(y,z) D\bigg(q(x|z), p(x|y)\bigg)\,\dy\dz\nonumber\\
&= \inf_{R\in\Gamma(q(z),p(y))}
\int R(z,y) D\bigg(q(x|z), p(x|y)\bigg)\,\dz\dy\\
&=H_D(q,p),
\end{align*}
where $R(z,y)=r(y,z)$ s.t.
$\int R(z,y) \dy  = q(z)$ and  $\int R(z,y) \dz  = p(y)$.

\item[Triangle inequality.]
Denote
\begin{align*}
r_{12} &= \argmin_{r\in\Gamma(p_1(y_1),\;p_2(y_2))} E_{r(y_1,y_2)} D( p_1(x|y_1), p_2(x|y_2) ),\\
r_{23} &= \argmin_{r\in\Gamma(p_2(y_2),\;p_3(y_3))} E_{r(y_2,y_3)} D( p_2(x|y_2), p_3(x|y_3) ).
\end{align*}
\begin{align*}
& H_D(p_1,p_2) + H_D(p_2,p_3)\nonumber\\
= &
  E_{r_{12}(y_1,y_2)} D( p_1(x|y_1), p_2(x|y_2) )
+ E_{r_{23}(y_2,y_3)} D( p_2(x|y_2), p_3(x|y_3) )\nonumber\\
\ge &
\inf_{s} E_{s(y_1,y_2,y_3)}
\left[
 D( p_1(x|y_1), p_2(x|y_2) ) +
 D( p_2(x|y_2), p_3(x|y_3) )
 \right]\nonumber\\
\ge &
\inf_{s} E_{s(y_1,y_2,y_3)}  D( p_1(x|y_1), p_3(x|y_3) )\nonumber\\
= &
\inf_{r} E_{r(y,z)}  D( p_1(x|y), p_3(x|z) )\\
= & H_D(p_1,p_3),
\end{align*}
\end{description}
where $s(y_1,y_2,y_3)$ denotes the set of all probability measures on $\calY^3$ with marginals $p_1$, $p_2$ and $p_3$.
Clearly, $\frac{r_{12}(y_1,y_2)r_{23}(y_2,y_3)}{p_2(y_2)}\in s(y_1,y_2,y_3)$.
\end{proof}

\section{Proof of upper bound of $H_D$ }

Without loss of generality we assume $p$ and $q$ are mixture models.
The proof for the general case is similar.

\begin{proof}
\begin{eqnarray*}
D(m_1:m_2)&=& D\left(\sum_{i=1}^{k_1} \alpha_ip_i,\sum_{j=1}^{k_2} \beta_j q_j\right)\\
&=& D\left(\sum_{i=1}^{k_1} \sum_{j=1}^{k_2} w_{i,j} p_{i,j} :\sum_{i=1}^{k_1} \sum_{j=1}^{k_2} w_{i,j} q_{i,j}\right)\\
&\leq & \sum_{i=1}^{k_1} \sum_{j=1}^{k_2} w_{i,j} D(p_{i,j}:q_{i,j}),\\
&\leq & \sum_{i=1}^{k_1} \sum_{j=1}^{k_2} w_{i,j} D(p_{i}:q_{j})
 =: H_D(m_1,m_2).
\end{eqnarray*}

\end{proof}

\section{Upper bounding  $f$-divergences\label{KL:UB}}

First, let us start by proving the following lemma for the Kullback-Leibler divergence:
\begin{lemma}\label{eq:lemmaubkl}
The Kullback-Leibler divergence between two  Radon-Nikodym $p$ and $q$ with respect to $\mu$
is upper bounded as follows:
$\KL(p:q)\leq \int \frac{p(x)^2}{q(x)}\dmu(x) -1$.
\end{lemma}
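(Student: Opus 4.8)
The statement to prove is Lemma~\ref{eq:lemmaubkl}: $\KL(p:q) \le \int \frac{p(x)^2}{q(x)} \dmu(x) - 1$.

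The plan is to exploit the elementary inequality $\log t \le t - 1$ for all $t > 0$, which is the workhorse behind almost every bound on the Kullback--Leibler divergence. First I would write $\KL(p:q) = \int p(x) \log\frac{p(x)}{q(x)} \dmu(x)$ and rewrite the logarithm so that the inequality can be applied in the right direction. The slick choice is to apply $\log t \le t-1$ with $t = \frac{p(x)}{q(x)}$, which gives $\log\frac{p(x)}{q(x)} \le \frac{p(x)}{q(x)} - 1$; multiplying by $p(x) \ge 0$ and integrating yields
\[
\KL(p:q) \le \int p(x)\left(\frac{p(x)}{q(x)} - 1\right) \dmu(x) = \int \frac{p(x)^2}{q(x)} \dmu(x) - \int p(x) \dmu(x) = \int \frac{p(x)^2}{q(x)} \dmu(x) - 1,
\]
using that $p$ is a density with respect to $\mu$, so $\int p \, \dmu = 1$. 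That is essentially the whole argument.

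The only genuine subtlety is measure-theoretic bookkeeping: one must be careful about the set where $q(x) = 0$. On that set the ratio $\frac{p(x)^2}{q(x)}$ is $+\infty$ whenever $p(x) > 0$, and $\KL(p:q)$ is likewise $+\infty$ unless $p \ll q$; so the inequality holds trivially (both sides infinite, or the right side infinite) unless $p$ is absolutely continuous with respect to $q$. I would therefore first dispatch the case $p \not\ll q$ by noting the right-hand side is then $+\infty$, and otherwise restrict all integrals to $\{q > 0\}$ (equivalently assume $p \ll q$), where the pointwise manipulation above is valid $\mu$-almost everywhere. I expect this edge case to be the main (and only) obstacle, and it is routine.

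Alternatively, one could obtain the bound as an instance of the general fact that $\KL$ is dominated by the $\chi^2$-divergence, $\KL(p:q) \le \chi^2(p:q) := \int \frac{(p-q)^2}{q}\dmu$, combined with the identity $\chi^2(p:q) = \int \frac{p^2}{q}\dmu - 1$; but the direct one-line computation via $\log t \le t - 1$ is cleaner and self-contained, so I would present that. I would close by remarking that equality holds iff $p = q$ $\mu$-almost everywhere, since $\log t = t-1$ only at $t = 1$.
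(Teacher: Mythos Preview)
Your proof is correct and essentially the same as the paper's: the paper obtains the pointwise inequality $p\log(p/q)\le p^2/q - p$ from the non-negativity of the scalar Bregman divergence $B_F(b,a)$ with $F(x)=-\log x$, $a=q(x)$, $b=p(x)$, which is exactly your inequality $\log t \le t-1$ at $t=p(x)/q(x)$ multiplied through by $p(x)$. Your discussion of the $q=0$ edge case is more careful than the paper's.
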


\begin{proof}
Consider a strictly convex and differentiable function $F(x)$ on $(0,\infty)$.
Then we have
\begin{equation}\label{eq:ineq}
F(b)-F(a)\geq F'(a)(b-a),
\end{equation}
for any $a,b\in (0,\infty)$, with equality iff. $a=b$.
Indeed, this inequality is related to the non-negativeness of the scalar Bregman divergence $B_F(b,a)=F(b)-F(a)-(b-a)F'(a)\geq 0$.

Plugging $F(x)=-\log x$ (with $F'(x)=-\frac{1}{x}$ and $F''(x)=\frac{1}{x^2}>0$), $a=q(x)$ and $b=p(x)$ in Eq.~\ref{eq:ineq},
we get
$$
\log q(x)-\log p(x) \geq \frac{q(x)-p(x)}{q(x)}.
$$
Multiplying both  sides of the inequality by $-p(x)<0$ (and reversing the inequality), we end up with
$$
p(x)\log \frac{p(x)}{q(x)} \leq \frac{p^2(x)}{q(x)}-p(x).
$$
Then taking the integral over the support $\calX$ of the distributions yields:
$$
\KL(p:q) \leq \int_\calX \frac{p(x)^2}{q(x)}\dmu(x) -1,
$$
with equality when $p(x)=q(x)$ almost everywhere.
Notice that the right-hand side integral $\int_\calX \frac{p(x)^2}{q(x)}\dmu(x)$ may diverge (e.g., when KL is infinite).
\end{proof}

Now, let us consider two mixtures $m(x)=\sum_{i=1}^{k} w_ip_i(x)$ and $m'(x)=\sum_{i=1}^{k'} w_i'p_i'(x)$.
Apply Lemma~\ref{eq:lemmaubkl} to get
$$
\KL(m:m') \leq \sum_{i,j}  \int w_iw_j\frac{p_i(x)p_j(x)}{m'(x)} \dmu(x)-1.
$$

Let us upper bound $A_{ij}=\int \frac{p_i(x)p_j(x)}{m'(x)}\dmu(x)$ to upper bound
$$
\KL(m:m') \leq \sum_{i,j}  w_iw_jA_{ij}-1.
$$

For bounding the terms  $A_{ij}$, we interpret the mixture density as an arithmetic weighted mean that is greater or equal than a geometric mean (AGM inequality).
Therefore we get:
$$
\int \frac{p_i(x)p_j(x)}{m'(x)}\dmu(x)  \leq \int \frac{p_i(x)p_j(x)}{\prod_{l=1}^{k'} w_l'p_l'(x)}\dmu(x).
$$

When the mixture components belong to a same exponential family~\cite{EF-2009}, we get a closed-form upper bound since $\theta_i+\theta_j-\sum_{l=1}^{k'} w_l'\theta_l'\in\Theta$:
Let $\bar\theta'=\sum_{l=1}^{k'} w_l'\theta_l'$ denote the barycenter of the  natural parameters of the mixture components of $m'$.
We have:
$$
\frac{p(x;\theta_i)p(x;\theta_j)}{\prod_{l=1}^{k'} w_l'p(x;\theta_l')}=
\exp\left(\left(\theta_i+\theta_j-\bar\theta'\right)^\top t(x)-F(\theta_i)-F(\theta_j)+\sum_{l=1}^{k'} w_l' F(\theta_l')+ k(x)\right).
$$

Taking the integral over the support we find that
$$
A_{ij} \leq \exp\left(F\left(\theta_i+\theta_j-\bar\theta'\right)-F(\theta_i)-F(\theta_j)+\sum_{l=1}^{k'} w_l' F(\theta_l')\right).
$$

Overall, we get the upper bound:
\begin{equation}
\KL(m:m') \leq \left(\sum_{i,j}  w_iw_j\exp\left(F\left(\theta_i+\theta_j-\bar\theta'\right)-F(\theta_i)-F(\theta_j)+\sum_{l=1}^{k'} w_l' F(\theta_l')\right) \right) -1.
\end{equation}

In general, we have the following upper bound for $f$-divergences~\cite{Dragomir-2000}:

\begin{property}[$f$-divergence upper bound]
The $f$-divergence between two densities $p$ and $q$ with respect to $\mu$ is upper bounded as follows:
 $I_f(p:q) \leq \int (q(x)-p(x))f'\left(\frac{q(x)}{p(x)}\right)\dmu(x)$.
\end{property}

\begin{proof}
Let us use the non-negative property of scalar Bregman divergences:
$$
B_F(a:b) = F(a)-F(b)-(a-b)F'(b) \geq 0.
$$
Let $F(x)=f(x)$ (with $F(1)=f(1)=0$), and $a=1$ and $b=\frac{q}{p}$.
It follows that
$$
B_F\left(1:\frac{q}{p}\right)= -f\left(\frac{q}{p}\right)-\left(1-\frac{q}{p}\right)f'\left(\frac{q}{p}\right) \geq 0.
$$
That is,
$$
pf\left(\frac{q}{p}\right) \leq p\left(\frac{q}{p}-1\right)f'\left(\frac{q}{p}\right).
$$
Taking the integral over the support, we get
$$
I_f(p:q) \leq \int (q-p)f'\left(\frac{q}{p}\right)\dmu.
$$
\end{proof}

For example, when $f(u)=-\log u$ (with $f'(u)=-\frac{1}{u}$), we recover the former upper bound:
$$
\KL(p:q) \leq \int (p-q)\frac{p}{q}\dmu = \int \frac{p^2}{q}\dmu -1.
$$
Notice that $\int \frac{p^2}{q}\dmu-1$ is  a $f$-divergence for the generator $f(u)=\frac{1}{u}-1$.

\section{Square root of the symmetric $\alpha$-Jensen-Shannon divergence}

TV is bounded in $[0,1]$ which makes it difficult to appreciate the quality of the CROT upper bounds in general.
We shall consider a different parametric distance $D_\alpha$ that is upper bounded by an arbitrary bound: $D_\alpha(p,q)\leq C_\alpha$.

It is well known that the square root of the Jensen-Shannon divergence is a metric~\citep{JS-2004} satisfying the triangle inequality.
In~\cite{symmetricJ-2010}, a generalization of the Jensen-Shannon divergence was proposed, given by
\begin{equation}
\JS_\alpha(p:q) \eqdef \frac{1}{2}\KL(p:(pq)_\alpha)+\frac{1}{2}\KL(q:(pq)_\alpha),
\end{equation}
where $(pq)_\alpha\eqdef (1-\alpha)p+\alpha q$.
$\JS_\alpha$ unifies (twice) the Jensen-Shannon divergence (obtained when $\alpha=\frac{1}{2}$) with the Jeffreys divergence ($\alpha=1$;~\citealt{symmetricJ-2010}).
A nice property is that the skew $K$-divergence is upper bounded as follows:
$$
\KL(p:(pq)_\alpha)\leq \int p\log \frac{p}{(1-\alpha)p} \leq -\log(1-\alpha)
$$
for $\alpha\in (0,1)$, so that $\JS_\alpha[p:q] \leq -\frac{1}{2}\log(1-\alpha)-\frac{1}{2}\log\alpha$ for $\alpha\in (0,1)$.

Thus, we have the square root of the symmetrized $\alpha$-divergence that is upper bounded by
$$
\sqrt{\JS_\alpha(p:q)} \leq  C_\alpha =\sqrt{-\frac{1}{2}\log(1-\alpha)-\frac{1}{2}\log\alpha}.
$$

However, $\sqrt{\JS_\alpha[p:q]}$ is not a metric in general~\citep{Osterreicher-2003}.
Indeed, in the extreme case of $\alpha=1$, it is known that any positive power of the Jeffreys divergence does not yield a metric.

Observe that $\JS_\alpha$ is a $f$-divergence since $K_\alpha(p:q)\eqdef\KL(p:(pq)_\alpha)$ is a $f$-divergence for the generator
$f(u)=-\log((1-\alpha)+\alpha u)$, and we have $\KL(q:(pq)_\alpha)=K_{1-\alpha}(q:p)$. Since $I_f(q:p)=I_{f^\diamond}(p:q)$ for $g(u)=uf(1/u)$, it follows that the $f$-generator $f_{\JS_\alpha}$ for the $\JS_\alpha$ divergence is:

\begin{equation}
f_{\JS_\alpha}(u)= -\log\left((1-\alpha)+\alpha u\right) - \log\left (\alpha+\frac{1-\alpha}{u}\right).
\end{equation}

Figure~\ref{fig:jscrot} and table~\ref{tab:jsmnist} display the experimental results obtained for the $\alpha$-JS divergences.
One can have similar observations with the TV results.

\def\ttt{1.0}

\begin{figure}
\centering
\includegraphics[width=\ttt\columnwidth]{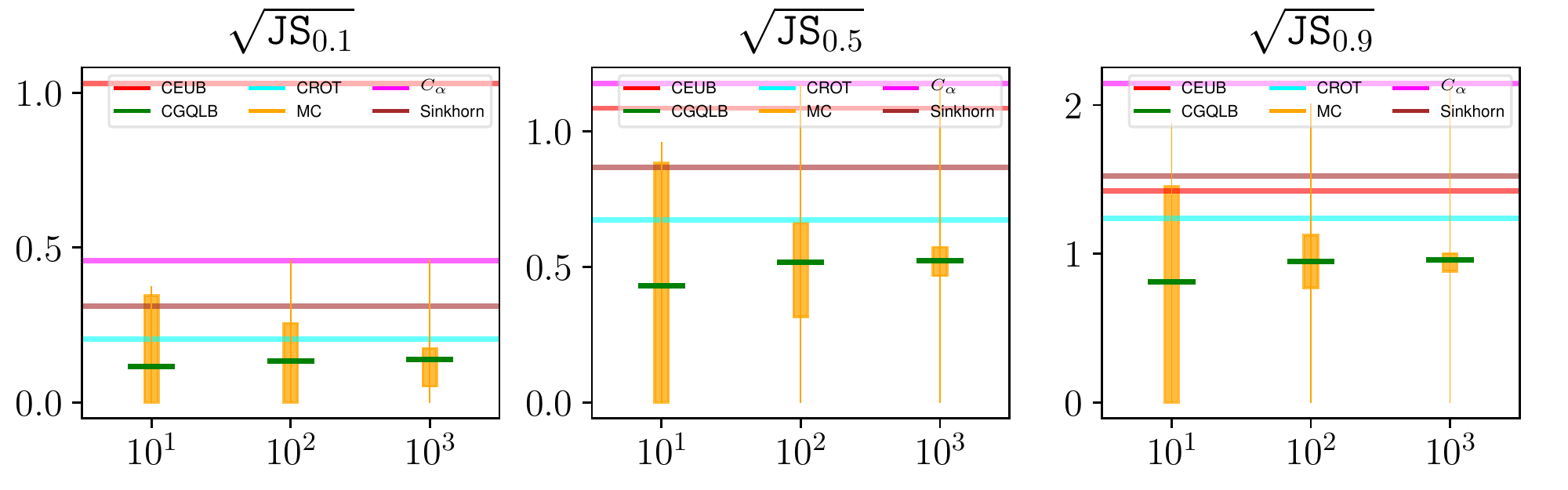}\\
\includegraphics[width=\ttt\columnwidth]{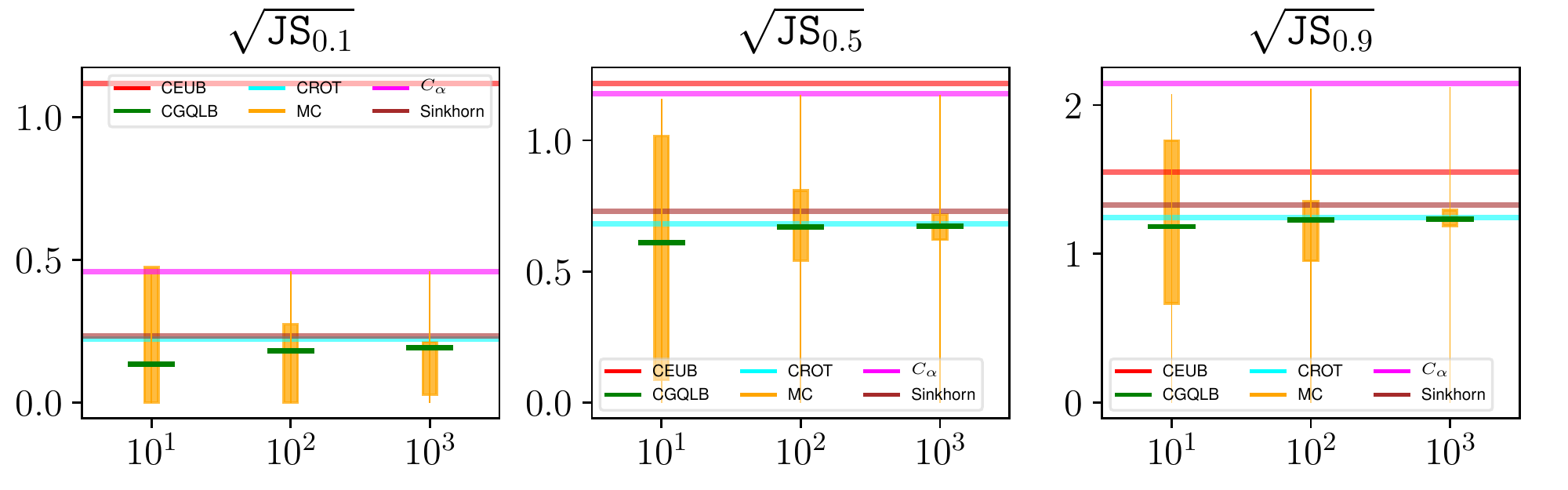}\\
\includegraphics[width=\ttt\columnwidth]{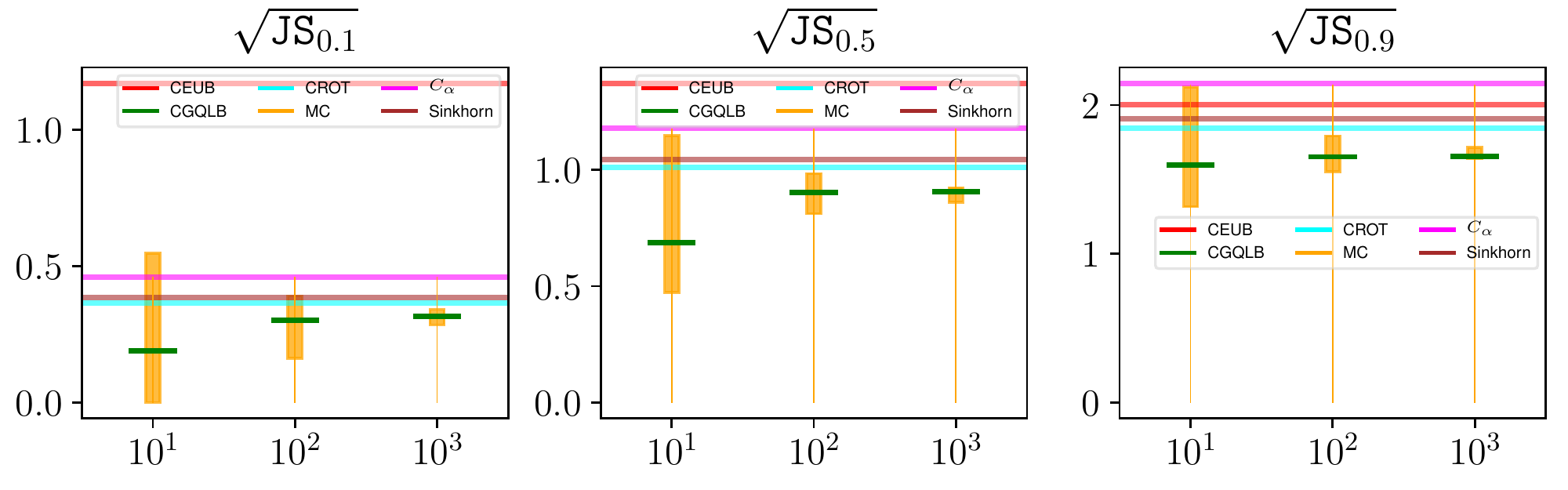}\\
\caption{Performance of the CROT distance and the Sinkhorn CROT distance for upper bounding the square root of the $\alpha$-Jensen-Shannon distance between mixtures of (1) Gaussian, (2) Gamma, and (3) Rayleigh distributions.}%
\label{fig:jscrot}%
\end{figure}

\begin{table}
\caption{Square root of the Jensen-Shannon divergence between two 10-component GMMs
estimated on PCA-processed images.\label{tab:jsmnist}}
\centering
\begin{tabular}{ccc|ccccc}
\hline
Data & $D$ & $\tau$ & $\sqrt{\mathrm{JS_{0.5}}}$ & CROT-$\sqrt{\mathrm{JS}_{0.5}}$ & Sinkhorn ($10$) & Sinkhorn ($1$)\\
\hline
\multirow{4}{*}{MNIST}
& $10$ & $1$ & $0.25\pm0.11$ & $0.36\pm0.17$ & $0.37\pm0.17$ & $0.94\pm0.05$\\
& $10$ & $0.1$ & $0.39\pm0.05$ & $0.55\pm0.07$ & $0.56\pm0.08$ & $1.00\pm0.02$\\
& $50$ & $1$ & $0.51\pm0.11$ & $0.54\pm0.12$ & $0.56\pm0.13$ & $0.93\pm0.04$\\
& $50$ & $0.1$ & $0.69\pm0.05$ & $0.76\pm0.07$ & $0.79\pm0.07$ & $1.00\pm0.03$\\
\hline
& $10$ & $1$ & $0.33\pm0.15$ & $0.31\pm0.13$ & $0.33\pm0.14$ & $0.96\pm0.04$\\
Fashion
& $10$ & $0.1$ & $0.46\pm0.09$ & $0.48\pm0.09$ & $0.49\pm0.10$ & $1.01\pm0.03$\\
MNIST
& $50$ & $1$ & $0.60\pm0.12$ & $0.57\pm0.14$ & $0.59\pm0.15$ & $1.03\pm0.04$\\
& $50$ & $0.1$ & $0.75\pm0.07$ & $0.76\pm0.09$ & $0.80\pm0.10$ & $1.08\pm0.02$\\
\hline
\end{tabular}
\end{table}

\section{Visualization of the optimal transport assignment problem of CROT and MCOT distances}

Figure~\ref{fig:CROT} illustrates the principle of the CROT distance.

\begin{figure}%
\centering
\includegraphics[width=0.8\columnwidth]{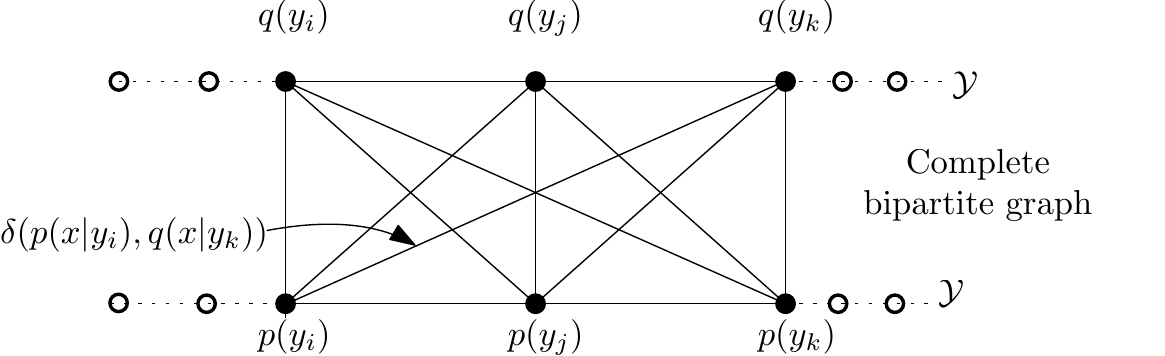}%
\caption{The CROT distance: Optimal matching of marginal densities \wrt a distance on conditional densities.
We consider the complete bipartite graph with edges weighted by the distances
$D$ between the corresponding conditional densities defined at edge vertices.}%
\label{fig:CROT}%
\end{figure}

\begin{figure}%
\centering
\includegraphics[width=0.85\columnwidth]{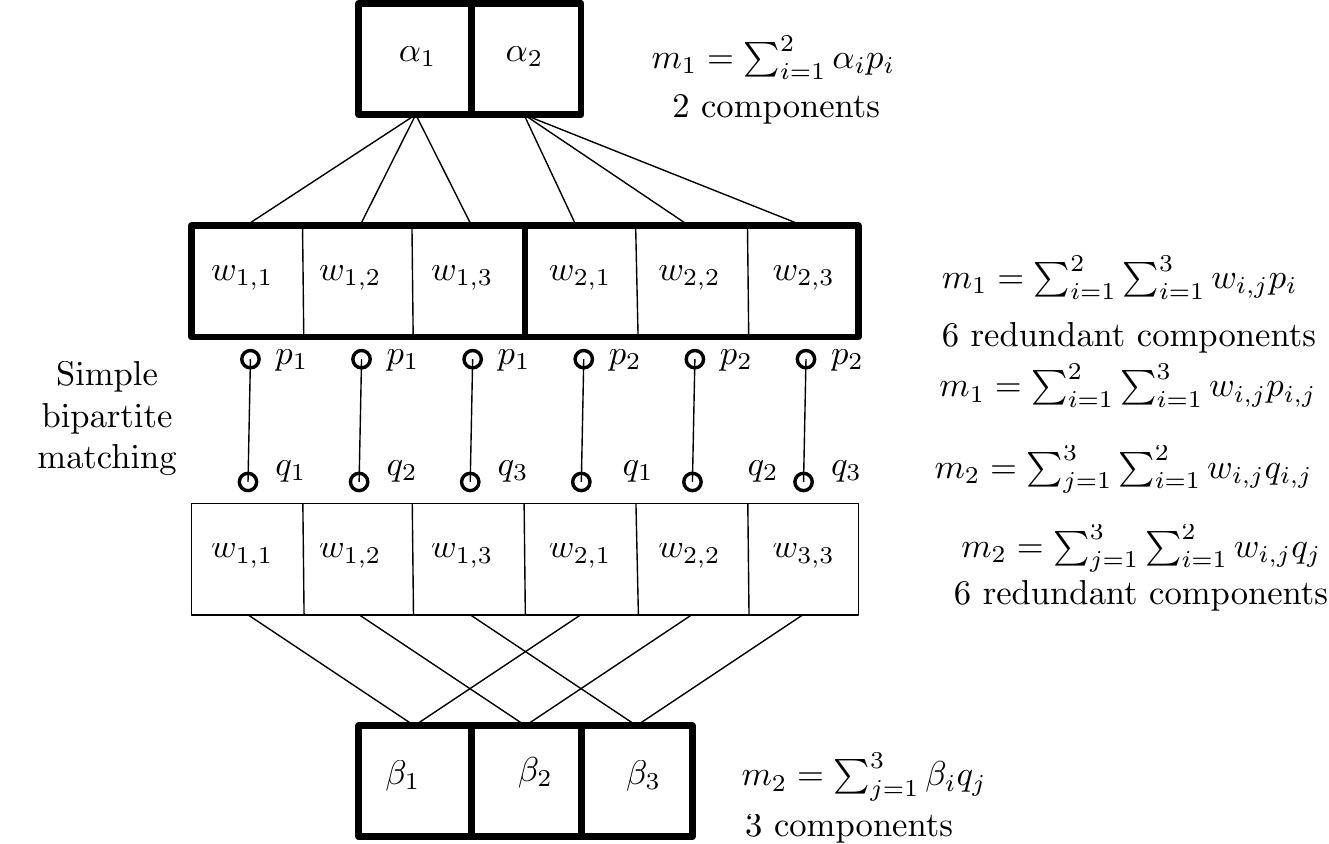}%

\caption{An interpretation of CROT by rewriting the mixtures
$m_1=\sum_{i=1}^{k_1} \sum_{j=1}^{k_2} w_{i,j} p_{i,j}$ and $m_2=\sum_{i=1}^{k_1} \sum_{j=1}^{k_2} w_{i,j} q_{i,j}$ with $p_{i,j}=p_i$ and $q_{i,j}=q_j$ and using the joint convexity of the base distance $D$. }%
\label{fig:MCOT}%
\end{figure}

\end{document}